%% file: main.tex
\documentclass[11pt]{article}
\usepackage[left=1in, right=1in, top=1in,
bottom=1in]{geometry}

\usepackage{amsmath}
\usepackage{amsthm}

\usepackage{amssymb}
\usepackage{amsfonts}
\usepackage{bbm}
\usepackage{bm}
\usepackage{graphicx}
\usepackage{grffile}
\usepackage{wrapfig,epsfig}
\usepackage{url}
\usepackage{epstopdf}
\usepackage{enumitem}
\usepackage{booktabs}
\usepackage{multirow}
\usepackage{makecell}
\usepackage{subcaption}

\usepackage{algorithm}
\usepackage{algorithmicx}
\usepackage{algpseudocode}

\usepackage{placeins}

\usepackage{tikz}
\usetikzlibrary{spy,calc}

\usepackage{adjustbox}
\usepackage{tabularx}
\usepackage{siunitx}

\usepackage{xcolor}
\usepackage{tcolorbox}
\usepackage{threeparttable}
\usepackage{array}
\usepackage{colortbl}

\usepackage{wrapfig}
\usepackage{pgfplots}
\pgfplotsset{compat=1.18}

\usepackage{newpxtext}
\usepackage{newpxmath}

\usepackage[pagebackref,breaklinks,colorlinks,citecolor=blue]{hyperref}

\theoremstyle{plain}
\newtheorem{theorem}{Theorem}[section]
\newtheorem{lemma}[theorem]{Lemma}

\newtheorem{corollary}[theorem]{Corollary}

\theoremstyle{definition}
\newtheorem{definition}[theorem]{Definition}
\newtheorem{assumption}[theorem]{Assumption}

\newtheorem{remark}[theorem]{Remark}

\renewcommand{\textsc}[1]{\textnormal{\scshape #1}}

\makeatletter
\def\blfootnote{\gdef\@thefnmark{}\@footnotetext}
\makeatother

\newcommand{\zoomfig}[5]{%
\begin{tikzpicture}[spy using outlines={
    rectangle,
    magnification=3,
    size=#5,
    connect spies,
    every spy on node/.append style={red, dashed, line width=0.8pt},
    every spy in node/.append style={red, dashed, line width=0.8pt}
}]
    \node[inner sep=0] (img) {\includegraphics[width=#4]{#1}};
    
    \spy on ($(img.south west)!#2!(img.south east)!#3!(img.north west)$)
        in node[anchor=north west] at ($(img.north east)+(-0.05cm,-0.05cm)$);
\end{tikzpicture}%
}

\begin{document}

%%
%% Title
%%
\title{SHIFT: Stochastic Hidden-Trajectory Deflection for Removing Diffusion-based Watermarks}

%%
%% Authors - TODO: Replace with your actual author information
%%
\author{
  \begin{tabular}{ccc}
Rui Bao$^{1}$\footnote{These authors contributed equally to this work.} ~&~ Zheng Gao$^{1*}$ ~&~ Xiaoyu Li$^1$ \\
 Xiaoyan Feng$^2$ ~&~ Yang Song$^1$ ~&~ Jiaojiao Jiang$^1$ \\[2ex]  
  \end{tabular}\\[1ex]  
  \begin{tabular}{ccc}
    \multicolumn{3}{c}{$^1$University of New South Wales} \\[1ex]  
    \multicolumn{3}{c}{$^2$Griffith University}
    % \multicolumn{3}{c}{\small $^*$These authors contributed equally to this work.}
  \end{tabular}
}

\maketitle

\begin{abstract}
\input{1_abstract}

\end{abstract}

\input{2_intro}
\input{3_relatedwork}

\input{4_method}
\input{5_experiments}

\input{6_conclusion}

\bibliographystyle{splncs04}
\bibliography{ref}

\clearpage

\appendix

\begin{center}
    \textbf{\huge Appendix}
\end{center}
\input{10_app}

\end{document}

%% file: 1_abstract.tex
Diffusion-based watermarking methods embed verifiable marks by manipulating the initial noise or the reverse diffusion trajectory. However, these methods share a critical assumption: verification can succeed only if the diffusion trajectory can be faithfully reconstructed. This reliance on trajectory recovery constitutes a fundamental and exploitable vulnerability. We propose \underline{\textbf{S}}tochastic \underline{\textbf{Hi}}dden-Trajectory De\underline{\textbf{f}}lec\underline{\textbf{t}}ion (\textbf{SHIFT})\footnote{Our code is available at \href{https://github.com/ZhengGao-30/SHIFT-Watermark-Attack}{https://github.com/ZhengGao-30/SHIFT-Watermark-Attack}.}, a training-free attack that exploits this common weakness across diverse watermarking paradigms. SHIFT leverages stochastic diffusion resampling to deflect the generative trajectory in latent space, making the reconstructed image statistically decoupled from the original watermark-embedded trajectory while preserving strong visual quality and semantic consistency. Extensive experiments on nine representative watermarking methods spanning noise-space, frequency-domain, and optimization-based paradigms show that SHIFT achieves 95\%--100\% attack success rates with nearly no loss in semantic quality, without requiring any watermark-specific knowledge or model retraining.

\blfootnote{Corresponding to: \texttt{zheng.gao1@unsw.edu.au}, \texttt{jiaojiao.jiang@unsw.edu.au}}

%% file: 2_intro.tex
\section{Introduction}
Diffusion-based generative models can now synthesize high-fidelity images that are virtually indistinguishable from real photographs. However, their potential for misuse in misinformation dissemination, copyright infringement, and deepfake generation has made reliable provenance tracking of AI-generated content a pressing concern. Digital watermarking, as a proactive content tracing mechanism, is widely regarded as one of the most promising technical solutions.

In recent years, a new paradigm of \emph{diffusion watermarking} has attracted considerable attention. Unlike traditional post-processing watermarks, these methods embed watermarks directly into the generative process of diffusion models: by modifying the frequency-domain structure of the initial noise vector (e.g., Tree-Ring~\cite{Tree-Ring}, RingID~\cite{ci2024ringid}), constraining the sampling distribution of latent variables (e.g., Gaussian Shading~\cite{yang2024gaussian}, PRC~\cite{gunn2024undetectable}), or binding semantic hashes (e.g., SEAL~\cite{arabi2025seal}), such that the watermark is deeply coupled with the generated content at the latent-space level. During verification, the defender maps the suspect image back to its initial latent variable via deterministic DDIM inversion and checks for the presence of the watermark signal. Owing to this generation-level embedding mechanism, diffusion watermarks exhibit excellent robustness against a wide range of conventional attacks, including JPEG compression, Gaussian blur, cropping, and rotation.

Meanwhile, attack methods targeting watermarks have also been rapidly evolving. Zhao et al.~\cite{zhao2024invisible} employs a deterministic PNDM sampler to reconstruct images from noisy latent representations, and theoretically proves that it can remove any pixel-level watermark with bounded $\ell_2$ perturbation. The latent-noise removing attack~\cite{jain2025forging} exploits the many-to-one mapping between images and initial noises, forging and removing watermarks by iteratively optimizing adversarial perturbations in the VAE latent space. Black-box attack~\cite{muller2025black} and D$^2$RA~\cite{meshram2025d2ra} further develop attack strategies from the perspectives of latent-space optimization and dual-domain frequency reconstruction, respectively. However, these methods face a common systemic predicament. The Regen attack demonstrates that regeneration based on deterministic samplers is nearly ineffective against semantic watermarks such as Tree-Ring, with post-attack detection rates remaining close to 100\%. D$^2$RA similarly finds that pixel-domain regeneration alone cannot remove Tree-Ring watermarks. While the latent-noise removing attack and black-box method can attack certain semantic watermarks, they require costly per-image iterative optimization that is difficult to scale. The fundamental reason is that deterministic regeneration inevitably preserves the structural frequency information in the latent space that carries the watermark, while existing methods that bypass this limitation must resort to expensive per-sample optimization. This systemic predicament has led diffusion watermarks to be regarded as the most reliable solution for AI content provenance.

However, this paper reveals a fundamental security vulnerability that has been entirely overlooked in the current literature. All of the aforementioned diffusion watermarks, regardless of their specific embedding mechanisms, implicitly rely on the same critical assumption during verification: \textbf{trajectory consistency}, i.e., the mathematical symmetry between the forward generation path and the backward DDIM inversion path can be exactly preserved. We discover that this shared dependence on trajectory consistency constitutes a unified and exploitable attack surface. Our key insight is that \textbf{stochastic reverse sampling inherently breaks this trajectory consistency}. Specifically, the stochastic sampling process injects independent Brownian motion noise at each timestep, irreversibly deflecting the generative trajectory from the original deterministic path. Meanwhile, the pretrained score function continues to pull the sample toward the natural image manifold, thereby preserving high semantic fidelity in the output image, yet the latent-space path it traverses has been thoroughly decoupled from the original watermark trajectory. When the defender performs DDIM inversion on the attacked image, the inversion traces back to a latent variable that is entirely unrelated to the embedded watermark, causing verification to fail. Based on this finding, we propose \textbf{SHIFT}    (\underline{\textbf{S}}tochastic \underline{\textbf{Hi}}dden-Trajectory De\underline{\textbf{f}}lec\underline{\textbf{t}}ion), a training-free attack that exploits stochastic reverse resampling to deflect the generative trajectory and thereby uniformly defeat diffusion watermarks.

SHIFT consists of two stages: (i)~partial forward diffusion, which injects the target watermarked image into an intermediate noisy state to progressively attenuate trajectory-specific information; and (ii)~stochastic reverse resampling, which regenerates the image from this noisy state along an entirely new random trajectory. SHIFT requires no model retraining or fine-tuning, no knowledge of the specific watermarking scheme, and no adversarial optimization, but only a publicly available pretrained diffusion model. On the theoretical side, we formalize trajectory-decoupling guarantees based on Wasserstein distance analysis, rigorously proving that the noise recovered via DDIM inversion from the attacked image is statistically independent of the original watermark noise.

In summary, our main contributions are as follows:
\begin{itemize}
    \item We identify \emph{trajectory consistency} as a fundamental 
    vulnerability shared by all existing diffusion watermarking 
    schemes, providing a unified perspective for analyzing their 
    security.
    
    \item We propose SHIFT, a training-free attack that deflects the 
    generative trajectory via partial forward diffusion and 
    stochastic reverse resampling, requiring no watermark-specific 
    knowledge or adversarial optimization.
    
    \item We provide formal trajectory-decoupling guarantees via 
    Wasserstein distance analysis, theoretically proving that 
    stochastic resampling breaks watermark verification while 
    deterministic resampling cannot.
    
    \item Experiments on nine watermarking methods across three 
    paradigms show that SHIFT achieves 95\%--100\% attack success 
    rates with the best image quality among all compared attacks.
\end{itemize}

\paragraph{Roadmap.} In Section~\ref{sec:related}, we discuss related work on diffusion-based watermarking, watermark removal attacks, and diffusion sampling. Section~\ref{sec:method} introduces SHIFT and describes its two-stage attack pipeline. Section~\ref{sec:theory} presents the theoretical analysis underlying trajectory decoupling. Section~\ref{sec:exp} evaluates SHIFT on a diverse set of watermarking methods and examines both attack effectiveness and perceptual quality. Section~\ref{sec:conclusion} concludes with limitations and directions for future work. Proofs and additional experimental results are provided in the appendix.

%% file: 3_relatedwork.tex
\section{Related Work}\label{sec:related}

\subsection{Diffusion Models and Sampling Mechanisms}
Diffusion models~\cite{ho2020denoising,song2019generative,song2021sde}
learn to reverse a forward noising process to generate samples from data
distributions. Latent Diffusion Models
(LDMs)~\cite{rombach2022high,kingma2014vae,esser2021vqgan} further
perform this denoising within a compressed latent space encoded by a
pretrained VAE, significantly reducing computational cost. A critical
design axis in diffusion sampling is the choice between deterministic
and stochastic reverse processes. Deterministic samplers such as
DDIM~\cite{song2021ddim} and its higher-order
extensions~\cite{lu2022dpmsolver,zhao2023unipc,liu2022pndm} establish
a bijective mapping from a given initial noise to a unique generated
image, a property that underpins DDIM
inversion~\cite{mokady2023nulltext,wallace2023edict} and, by
extension, most diffusion-based watermark verification schemes. However,
this deterministic invertibility is inherently
fragile~\cite{lin2024schedule,blasingame2025reversible}: prediction
errors accumulate along the trajectory, and the resulting reconstruction
inconsistency has been identified as a systematic vulnerability. In
contrast, stochastic samplers~\cite{ho2020denoising,song2021sde,
xue2023sasolver,gonzalez2023seeds} inject fresh Brownian noise at every
reverse step, causing distinct sampling runs from the same starting
point to diverge onto entirely different generative trajectories. Recent
analysis~\cite{nie2024sdebeatsode} further demonstrates that this
stochasticity is advantageous for image-to-image tasks. It is precisely
this trajectory-diversifying property that our attack exploits.

\subsection{Watermarking for Diffusion Models}
As diffusion models become the dominant paradigm for image generation,
a growing body of work has sought to embed verifiable watermarks into
the generation pipeline itself. Existing schemes can be broadly
grouped by where the watermark is injected. \emph{Noise-space methods}
modify the initial latent before sampling:
Tree-Ring~\cite{Tree-Ring} writes a Fourier ring pattern into the
initial noise, RingID~\cite{ci2024ringid} extends this to
multi-channel heterogeneous patterns for multi-key identification,
PRC~\cite{gunn2024undetectable} draws initial latents from pseudorandom error-correcting
codes for provable undetectability, and WIND~\cite{arabi2024hidden}
organises large key pools through a two-stage grouping framework.
\emph{Latent- and frequency-domain methods} instead act on
intermediate representations: Gaussian
Shading~\cite{yang2024gaussian} introduces key-controlled
colour offsets in the spectral domain during sampling,
GaussMarker~\cite{li2025gaussmarker} jointly encodes bits in
high-frequency components and the noise trajectory, while
SFW~\cite{lee2025semantic} and SEAL~\cite{arabi2025seal} bind the mark to
semantic content. More recently, SLICE~\cite{gao2026slice} enables localized tamper detection via compartmentalized semantic watermarking. \emph{Optimisation-based methods} such as
ROBIN~\cite{huang2024robin} take a different route, training hidden
prompts via adversarial objectives so that generated images
automatically carry a latent-space signature.
Although these schemes differ in design, they share a common
assumption: verification succeeds only when the diffusion trajectory
can be faithfully traced back, either through exact noise inversion
or through the stability of intermediate latent structures. It is
precisely this trajectory dependence that our attack targets.

\subsection{Attacks on Image Watermarks}
Existing attacks on image watermarks include regeneration-based and optimization-based approaches. \emph{Regeneration-based attacks} reconstruct the suspect image through a pretrained diffusion model to disrupt the embedded signal. Zhao et al.~\cite{zhao2024invisible} formalize this paradigm with a deterministic PNDM sampler and prove removal guarantees for pixel-level watermarks. CtrlRegen~\cite{liu2024image} augments the pipeline with trained semantic and spatial control modules to improve fidelity. Saberi et al.~\cite{saberi2023robustness} adapt DiffPure to watermark removal using a reverse-time SDE in pixel space, while DDWRM~\cite{mareen2024diffusion} applies pixel-space DDPM denoising to traditional watermarks. \emph{Optimization-based attacks} take a different route: the latent-noise removing attack~\cite{jain2025forging} and black-box attack~\cite{muller2025black} iteratively optimize adversarial perturbations in the VAE latent space, achieving targeted evasion at the cost of expensive per-image gradient computation. CSI~\cite{gao2026breakingsemanticawarewatermarksllmguided} exploit LLM-guided semantic manipulation in discrete prompt space to evade content-aware watermarks such as SEAL. Despite their diversity, all these methods treat watermark robustness as an empirical obstacle and target individual schemes without analyzing the common mechanism that underlies verification. SHIFT instead identifies trajectory consistency as the shared vulnerability across all diffusion watermarking paradigms and exploits it through a principled, training-free attack with formal trajectory-decoupling guarantees.

%% file: 4_method.tex
\section{Method}\label{sec:method}

\begin{figure*}[t]
    \centering
    \includegraphics[width=0.8\textwidth]{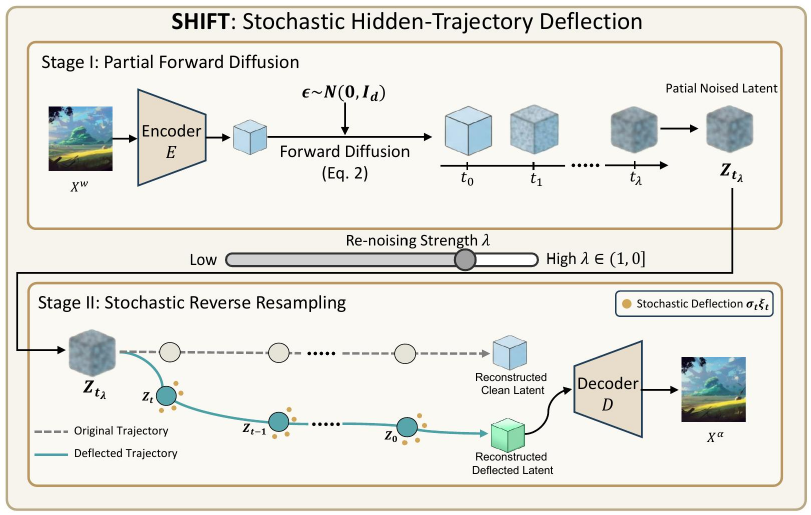}
    \caption{The overall framework of SHIFT}
    \label{fig:framework}
\end{figure*}

We propose \underline{\textbf{S}}tochastic \underline{\textbf{Hi}}dden-Trajectory De\underline{\textbf{f}}lec\underline{\textbf{t}}ion (\textbf{SHIFT}), a training-free attack against diffusion-based watermarking. Figure~\ref{fig:framework} provides an overview of the proposed framework. The key insight is that these schemes, whether they manipulate initial noise, intermediate latents, or reverse-process trajectories, all rely on the same assumption at verification time: the diffusion trajectory can be faithfully reconstructed from the generated image.
SHIFT breaks this assumption by \emph{deflecting} the generative trajectory via stochastic resampling, producing an image that is semantically faithful to the original yet statistically decoupled from the watermark-embedded path.

\subsection{Threat Model}

We consider a realistic \emph{black-box} attack setting formalized as follows.

\begin{definition}[Watermarked Generation Pipeline]
Let $\mathcal{W}$ denote a diffusion-based watermarking scheme that, given a text prompt $c$ and a secret key $k$, produces a watermarked image $\mathbf{x}^w = \mathcal{W}(c, k)$ by manipulating the diffusion process (e.g., initial noise, intermediate latents, or reverse trajectory).
An associated verifier $\mathcal{V}_k$ extracts a message estimate $\hat{m} = \mathcal{V}_k(\mathbf{x}^q)$ from any query image $\mathbf{x}^q$ and declares \textsc{Watermarked} if $\operatorname{BA}(\hat{m}, m) \geq \tau$, where $\operatorname{BA}$ denotes the bit accuracy and $\tau$ is a verification threshold.
\end{definition}

\begin{assumption}[Attacker's Capabilities and Knowledge]
\label{asm:attacker} 
We assume that the attacker has access to:
\begin{enumerate}[label=(\roman*),nosep]
    \item the watermarked image $\mathbf{x}^w$;
    \item a \emph{publicly available} pretrained latent diffusion model $(\mathcal{E}, \mathcal{D}, \boldsymbol{\epsilon}_\theta)$ with VAE encoder $\mathcal{E}$, decoder $\mathcal{D}$, and noise prediction network $\boldsymbol{\epsilon}_\theta$. It may or may not be the same model used by $\mathcal{W}$.
\end{enumerate}
The attacker does \textbf{not} know:
\begin{enumerate}[label=(\roman*),nosep,resume]
    \item the watermark key $k$, message $m$, or embedding algorithm;
    \item the prompt $c$ used during generation;
    \item the internal architecture or parameters of the verifier $\mathcal{V}_k$.
\end{enumerate}
In particular, no model retraining, adversarial optimization, or watermark-specific adaptation is permitted.
\end{assumption}

\subsection{Attack Objective}

Given a watermarked image $\mathbf{x}^w$, the attacker seeks to produce an attacked image $\mathbf{x}^a = \mathcal{A}(\mathbf{x}^w)$ that simultaneously minimizes two competing quantities:
\begin{enumerate}[nosep]
    \item \textbf{Semantic  Fidelity:} the perceptual distance $d_{\mathrm{sem}}(\mathbf{x}^a, \mathbf{x}^w)$ (e.g., LPIPS~\cite{zhang2018unreasonable} or $1 - \mathrm{CLIP}_{\mathrm{sim}}$), ensuring that the attacked image remains visually faithful to the original;
    \item \textbf{Bit Accuracy:} $\operatorname{BA}\bigl(\mathcal{V}_k(\mathbf{x}^a),\, m\bigr)$, driving the extracted watermark away from the embedded message $m$ so that verification fails.
\end{enumerate}

The central challenge is that pixel-level perturbations (e.g., JPEG compression, Gaussian blur) trade fidelity for evasion in a crude, watermark-agnostic manner.
We instead operate in the \emph{latent trajectory space} of diffusion models, where watermark information is structurally encoded, enabling targeted disruption of the verification signal while preserving semantic content.

\subsection{Stage I: Partial Forward Diffusion}

Diffusion-based watermarks encode information into the generative trajectory, either through the initial noise $\mathbf{z}_T$, intermediate latents $\{\mathbf{z}_t\}_{t}$, or the mapping between them.
To disrupt this encoding, we must erase trajectory-specific information from the latent representation of $\mathbf{x}^w$.
A naive approach would add pixel-space noise, but this destroys high-level semantic structure without targeting the latent trajectory. Instead, we we perform \emph{controlled forward diffusion in latent space}, which progressively replaces trajectory-specific fine-grained details with isotropic Gaussian noise while preserving the coarse semantic scaffold encoded in the signal component. 

\paragraph{Procedure.} Given the watermarked image $\mathbf{x}^w$, we first obtain its latent representation via the pretrained VAE encoder:
\begin{equation}
    \mathbf{z}_0 = \mathcal{E}(\mathbf{x}^w).
    \label{eq:encode}
\end{equation}
We then select a re-noising strength $\lambda \in (0,1]$ and define the target timestep $
t_\lambda = \lceil \lambda T \rceil,
$
where $T$ is the total number of diffusion steps.
The partially noised latent is constructed via the closed-form DDPM forward process~\cite{ho2020denoising}:
\begin{equation}
    \mathbf{z}_{t_\lambda} = \sqrt{\bar{\alpha}_{t_\lambda}}\, \mathbf{z}_0 + \sqrt{1 - \bar{\alpha}_{t_\lambda}}\, \boldsymbol{\epsilon},
    \label{eq:forward}
\end{equation}
where $\boldsymbol{\epsilon} \sim \mathcal{N}(\mathbf{0}, \mathbf{I}_d)$ is the standard Gaussian noise, and $\bar{\alpha}_{t_\lambda} = \prod_{i=1}^{t_\lambda}(1 - \beta_i)$ is the cumulative signal retention coefficient under the noise schedule $\{\beta_i\}_{i=1}^T$.

\begin{remark}[Role of $\lambda$]
\label{rem:lambda}
The parameter $\lambda$ governs the \emph{fidelity--evasion trade-off}.
As $\lambda$ increases, the target timestep $t_\lambda$ moves deeper into the same prescribed $T$-step diffusion chain, so the partially noised latent retains less trajectory-specific information from the source latent $\mathbf{z}_0$.
With the discrete convention $t_\lambda = \lceil \lambda T \rceil$, we have $t_\lambda \to T$ as $\lambda \to 1$, so $\lambda \to 1$ corresponds to approaching the terminal depth of the original diffusion schedule.
Larger $\lambda$ therefore yields stronger trajectory removal but increases the risk of semantic degradation.
The optimal $\lambda$ is the smallest value that achieves watermark evasion.
\end{remark}

\subsection{Stage II: Stochastic Reverse Resampling}

Given the partially noised latent $\mathbf{z}_{t_\lambda}$, we must reconstruct a clean image.
A deterministic sampler (e.g., DDIM~\cite{song2021ddim}) would attempt to recover the \emph{unique} trajectory associated with $\mathbf{z}_{t_\lambda}$, potentially preserving residual watermark structure.
Instead, we employ a \emph{stochastic} sampler, which injects fresh Brownian noise at each reverse step.
This ensures that the reverse trajectory is not uniquely determined by the starting point, causing the regenerated sample to diverge onto an entirely new generative path.

\paragraph{Procedure.} We perform reverse diffusion from $\mathbf{z}_{t_\lambda}$ to $\mathbf{z}'_0$ using the Euler ancestral sampler. At each reverse step from timestep $t$ to $t{-}1$, the denoiser first predicts the clean latent:
\begin{equation}
    \hat{\mathbf{z}}_0 = \frac{\mathbf{z}_t - \sqrt{1 - \bar{\alpha}_t}\, \boldsymbol{\epsilon}_\theta(\mathbf{z}_t, t)}{\sqrt{\bar{\alpha}_t}},
    \label{eq:predict_clean}
\end{equation}
and the update rule is
\begin{equation}
    \mathbf{z}_{t-1} = \sqrt{\bar{\alpha}_{t-1}}\, \hat{\mathbf{z}}_0 
    + \underbrace{\sqrt{1 - \bar{\alpha}_{t-1} - \sigma_t^2}\, \boldsymbol{\epsilon}_\theta(\mathbf{z}_t, t)}_{\text{deterministic direction}}
    + \underbrace{\sigma_t\, \boldsymbol{\xi}_t}_{\text{stochastic deflection}}
    \label{eq:reverse}
\end{equation}
where $\boldsymbol{\xi}_t \sim \mathcal{N}(\mathbf{0}, \mathbf{I}_d)$  is the standard Gaussian noise, and $\sigma_t > 0$ is the ancestral noise scale.
When $\sigma_t = 0$, this reduces to deterministic DDIM; when $\sigma_t > 0$, each step injects fresh randomness that progressively deflects the trajectory.

The final latent $\mathbf{z}_0$ is decoded to pixel space:
\begin{equation}
    \mathbf{x}^a = \mathcal{D}(\mathbf{z}_0).
    \label{eq:decode}
\end{equation}

\subsection{Algorithm}
The complete algorithm of SHIFT is presented in Algorithm~\ref{alg:shift}. 
The attack requires only a single forward pass through the VAE encoder, $t_\lambda$ evaluations of $\boldsymbol{\epsilon}_\theta$, and a single VAE decoder. Hence it has the same cost as a standard img2img diffusion pipeline with $t_\lambda$ steps.
No gradient computation, iterative optimization, or watermark-specific adaptation is needed, making SHIFT both computationally efficient and universally applicable.

\begin{algorithm}[!htp]
\caption{SHIFT: Stochastic Hidden-Trajectory Deflection}
\label{alg:shift}
\begin{algorithmic}[1]
\Require Watermarked image $\mathbf{x}^w$, pretrained LDM $(\mathcal{E}, \mathcal{D}, \boldsymbol{\epsilon}_\theta)$, re-noising strength $\lambda \in (0,1)$, noise schedule $\{\beta_i\}_{i=1}^T$, ancestral noise scales $\{\sigma_t\}$
\Ensure Attacked image $\mathbf{x}^a$
\Statex
\Statex \hspace{-1.5em}\textbf{Stage I: Partial Forward Diffusion}
\State $\mathbf{z}_0 \gets \mathcal{E}(\mathbf{x}^w)$ \Comment{Encode to latent space}
\State $t_\lambda \gets \lfloor \lambda \cdot T \rfloor$ \Comment{Target re-noising timestep}
\State $\boldsymbol{\epsilon} \sim \mathcal{N}(\mathbf{0}, \mathbf{I}_d)$ \Comment{Sample fresh noise}
\State $\mathbf{z}_{t_\lambda} \gets \sqrt{\bar{\alpha}_{t_\lambda}}\, \mathbf{z}_0 + \sqrt{1 - \bar{\alpha}_{t_\lambda}}\, \boldsymbol{\epsilon}$ \Comment{Controlled re-noising}
\Statex
\Statex \hspace{-1.5em}\textbf{Stage II: Stochastic Reverse Resampling}
\For{$t = t_\lambda, t_\lambda - 1, \ldots, 1$}
    \State $\hat{\mathbf{z}}_0 \gets \bigl(\mathbf{z}_t - \sqrt{1 - \bar{\alpha}_t}\, \boldsymbol{\epsilon}_\theta(\mathbf{z}_t, t)\bigr) \big/ \sqrt{\bar{\alpha}_t}$ \Comment{Predict clean latent}
    \State $\boldsymbol{\xi}_t \sim \mathcal{N}(\mathbf{0}, \mathbf{I}_d)$ \Comment{Sample fresh noise}
    \State $\mathbf{z}_{t-1} \gets \sqrt{\bar{\alpha}_{t-1}}\, \hat{\mathbf{z}}_0 + \sqrt{1 - \bar{\alpha}_{t-1} - \sigma_t^2}\, \boldsymbol{\epsilon}_\theta(\mathbf{z}_t, t) + \sigma_t\, \boldsymbol{\xi}_t$ 
    \Statex \Comment{Ancestral update}
\EndFor
\Statex
\Statex \hspace{-1.5em}\textbf{Decode}
\State $\mathbf{x}^a \gets \mathcal{D}(\mathbf{z}_0)$ \Comment{Decode to pixel space}
\State \Return $\mathbf{x}^a$
\end{algorithmic}
\end{algorithm}

\section{Theoretical Justification on Trajectory Decoupling}
\label{sec:theory}

Diffusion-based watermark verifiers implicitly rely on a \emph{trajectory-consistency} assumption: the queried image remains statistically coupled to the watermark-carrying generative path, so that inversion can recover noise or latent features aligned with the embedded signal.
SHIFT breaks this assumption.
Stage I attenuates the contribution of the source latent via partial forward diffusion, while Stage II reconstructs the image through stochastic ancestral resampling, thereby replacing the original watermark-consistent reverse path with a newly sampled one.
Thus, the attack does not merely perturb pixels; it disrupts the trajectory-level dependency on which diffusion watermark verification depends.

To formalize this effect, let $\hat{\boldsymbol{\epsilon}}_{t_\lambda}^a$ denote the DDIM-inverted noise recovered from the attacked image generated by SHIFT at depth $t_\lambda = \lceil \lambda T \rceil$, and let $\tilde{\boldsymbol{\epsilon}}_{t_\lambda}$ denote the baseline recovered noise obtained by running the same stochastic reverse process from pure attack noise (i.e., with the source-latent component removed), so that $\tilde{\boldsymbol{\epsilon}}_{t_\lambda}$ is independent of the watermark-carrying noise $\boldsymbol{\epsilon}^w$.
The full formal theorem and proof are deferred to Appendix.
Here, $W_2$ denotes the $2$-Wasserstein distance, $\mathcal{L}(\cdot)$ denotes the probability law of a random variable, and $\otimes$ denotes the product of marginal laws.

\begin{theorem}[Informal]
\label{prop:shift_informal}
Under mild regularity assumptions on the public denoiser, the stochastic reverse sampler, and the DDIM inversion map, the following hold for every $\lambda\in(0,1]$:
\begin{enumerate}[label=(\roman*),nosep,leftmargin=*]
    \item It holds that
    $$\mathbb{E}\left[\|\hat{\boldsymbol{\epsilon}}_{t_\lambda}^a-\tilde{\boldsymbol{\epsilon}}_{t_\lambda}\|_2^2\right]
    \le \Delta_{t_\lambda}, ~~~\text{where}~~~\Delta_{t_\lambda} := L_Q^2 C_{t_\lambda}^2\,\bar{\alpha}_{t_\lambda}\,\mathbb{E}[\|\mathbf{z}_0\|_2^2].$$
    Here $C_{t_\lambda}$ is the cumulative Lipschitz constant of the $t_\lambda$-step reverse map, and $L_Q$ is the Lipschitz constant of the verification pipeline ($\mathcal{I}_{\mathrm{DDIM}} \circ \mathcal{E} \circ \mathcal{D}$);

    \item It holds that $$
    W_2\left(
    \mathcal{L}(\hat{\boldsymbol{\epsilon}}_{t_\lambda}^a,\boldsymbol{\epsilon}^w),\,
    \mathcal{L}(\hat{\boldsymbol{\epsilon}}_{t_\lambda}^a)\otimes \mathcal{L}(\boldsymbol{\epsilon}^w)
    \right)
    \le 2\sqrt{\Delta_{t_\lambda}};
    $$

    \item
    If, in addition, $\tilde{\boldsymbol{\epsilon}}_T \sim \mathcal{N}(\mathbf{0},\mathbf{I}_d)$ and $\boldsymbol{\epsilon}^w \sim \mathcal{N}(\mathbf{0},\mathbf{I}_d)$, then at the terminal step $t_\lambda = T$,
    \[
    \left|
    \mathbb{E}\left[\|\hat{\boldsymbol{\epsilon}}_T^a-\boldsymbol{\epsilon}^w\|_2^2\right]
    - 2d
    \right|
    \le
    \Delta_T + 2\sqrt{2d\,\Delta_T}.
    \]
\end{enumerate}
\end{theorem}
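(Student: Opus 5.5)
The plan is to set up a synchronous coupling and then obtain (i), (ii), (iii) in that order, each from the previous one by an elementary inequality.

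\textbf{Coupling.} Write the attacked latent path as the output of the $t_\lambda$-step stochastic reverse map $\Phi_{t_\lambda}(\cdot;\boldsymbol{\xi})$, where $\boldsymbol{\xi}=(\boldsymbol{\xi}_t)_t$ collects the ancestral noises. It is started from
\[
\mathbf{z}_{t_\lambda}=\sqrt{\bar{\alpha}_{t_\lambda}}\,\mathbf{z}_0+\sqrt{1-\bar{\alpha}_{t_\lambda}}\,\boldsymbol{\epsilon}.
\]
The baseline uses the \emph{same} $\boldsymbol{\epsilon}$ and the \emph{same} $\boldsymbol{\xi}$, but starts from $\sqrt{1-\bar{\alpha}_{t_\lambda}}\,\boldsymbol{\epsilon}$, i.e.\ with the source-latent component removed. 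Define the recovered noises
\[
\hat{\boldsymbol{\epsilon}}^a_{t_\lambda}=Q(\Phi_{t_\lambda}(\mathbf{z}_{t_\lambda};\boldsymbol{\xi})),\qquad
\tilde{\boldsymbol{\epsilon}}_{t_\lambda}=Q(\Phi_{t_\lambda}(\sqrt{1-\bar{\alpha}_{t_\lambda}}\boldsymbol{\epsilon};\boldsymbol{\xi})),\qquad
Q=\mathcal{I}_{\mathrm{DDIM}}\circ\mathcal{E}\circ\mathcal{D}.
\]
The regularity assumptions I will impose are the following.
\begin{itemize}
\item Each ancestral step (eq.~\eqref{eq:reverse}) is Lipschitz in $\mathbf{z}_t$, uniformly in $\boldsymbol{\xi}_t$. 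This holds because the additive noise $\sigma_t\boldsymbol{\xi}_t$ cancels in differences, so composing the steps gives a constant $C_{t_\lambda}$.
\item $Q$ is $L_Q$-Lipschitz.
\item $\boldsymbol{\epsilon}$ and $\boldsymbol{\xi}$ are independent of $(\mathbf{z}_0,\boldsymbol{\epsilon}^w)$. This makes $\tilde{\boldsymbol{\epsilon}}_{t_\lambda}$, a function of $(\boldsymbol{\epsilon},\boldsymbol{\xi})$ only, independent of $\boldsymbol{\epsilon}^w$.
\end{itemize}

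\textbf{Part (i).} Pathwise,
\[
\|\hat{\boldsymbol{\epsilon}}^a-\tilde{\boldsymbol{\epsilon}}\|\le L_Q C_{t_\lambda}\sqrt{\bar{\alpha}_{t_\lambda}}\,\|\mathbf{z}_0\|.
\]
Squaring and taking expectations gives $\Delta_{t_\lambda}$ directly.

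\textbf{Part (ii).} Use the triangle inequality for $W_2$ through the intermediate law $\mathcal{L}(\tilde{\boldsymbol{\epsilon}},\boldsymbol{\epsilon}^w)$, which equals $\mathcal{L}(\tilde{\boldsymbol{\epsilon}})\otimes\mathcal{L}(\boldsymbol{\epsilon}^w)$ by independence.
\begin{itemize}
\item The first leg is bounded by the coupling $\bigl((\hat{\boldsymbol{\epsilon}}^a,\boldsymbol{\epsilon}^w),(\tilde{\boldsymbol{\epsilon}},\boldsymbol{\epsilon}^w)\bigr)$, whose second components agree. So $W_2\le\sqrt{\Delta}$.
\item The second leg, $W_2(\mathcal{L}(\tilde{\boldsymbol{\epsilon}})\otimes\mathcal{L}(\boldsymbol{\epsilon}^w),\mathcal{L}(\hat{\boldsymbol{\epsilon}}^a)\otimes\mathcal{L}(\boldsymbol{\epsilon}^w))$, is bounded by gluing an optimal coupling of the first marginals with the identity on an independent copy of $\boldsymbol{\epsilon}^w$. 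So it is at most $W_2(\mathcal{L}(\tilde{\boldsymbol{\epsilon}}),\mathcal{L}(\hat{\boldsymbol{\epsilon}}^a))\le\sqrt{\Delta}$.
\end{itemize}
Summing the two legs gives $2\sqrt{\Delta}$.

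\textbf{Part (iii).} Expand
\[
\|\hat{\boldsymbol{\epsilon}}^a-\boldsymbol{\epsilon}^w\|^2=\|\tilde{\boldsymbol{\epsilon}}-\boldsymbol{\epsilon}^w\|^2+\|\hat{\boldsymbol{\epsilon}}^a-\tilde{\boldsymbol{\epsilon}}\|^2+2\langle\hat{\boldsymbol{\epsilon}}^a-\tilde{\boldsymbol{\epsilon}},\,\tilde{\boldsymbol{\epsilon}}-\boldsymbol{\epsilon}^w\rangle.
\]
At $t_\lambda=T$, independence and the two standard Gaussian laws give $\mathbb{E}\|\tilde{\boldsymbol{\epsilon}}_T-\boldsymbol{\epsilon}^w\|^2=2d$, since the cross term has zero mean. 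The middle term lies in $[0,\Delta_T]$. By Cauchy--Schwarz, the inner-product term is at most $2\sqrt{\Delta_T}\sqrt{2d}$ in absolute value. Together these yield $|\,\mathbb{E}\|\hat{\boldsymbol{\epsilon}}^a_T-\boldsymbol{\epsilon}^w\|^2-2d\,|\le\Delta_T+2\sqrt{2d\Delta_T}$.

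\textbf{Main obstacle.} The delicate step is making the coupling legitimate. The baseline must be defined with shared randomness so that the pathwise Lipschitz bound applies. Independence of $\tilde{\boldsymbol{\epsilon}}$ from $\boldsymbol{\epsilon}^w$ then requires the attacker's noises to be drawn independently of the watermark pipeline. I will state both conditions explicitly as assumptions in the formal version.
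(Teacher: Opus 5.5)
Your proposal is correct and follows the paper's proof essentially step for step: the same synchronous coupling with shared $(\boldsymbol{\epsilon},\boldsymbol{\xi})$, the pathwise Lipschitz bound through $Q$ for (i), the $W_2$ triangle inequality through $\mathcal{L}(\tilde{\boldsymbol{\epsilon}})\otimes\mathcal{L}(\boldsymbol{\epsilon}^w)$ for (ii), and the expansion plus Cauchy--Schwarz for (iii). Your gluing with an independent copy of $\boldsymbol{\epsilon}^w$ for the second $W_2$ leg spells out what the paper compresses into ``bounded in the same way''; the only other difference is that the paper makes your per-step constant concrete as $\rho_t = A_t + |B_t|L_t$, derived from a Lipschitz assumption on $\boldsymbol{\epsilon}_\theta$.
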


\noindent\textbf{Takeaway.}
Theorem~\ref{prop:shift_informal} formalizes the trajectory-level effect of SHIFT at any attack depth, not only at the terminal step.
The bound $\Delta_{t_\lambda} = L_Q^2 C_{t_\lambda}^2\,\bar{\alpha}_{t_\lambda}\,\mathbb{E}[\|\mathbf{z}_0\|_2^2]$ captures two competing effects: as $t_\lambda$ increases, $\bar{\alpha}_{t_\lambda}$ decreases (stronger erasure of the source signal), while $C_{t_\lambda}$ generally grows (error accumulation over more reverse steps).
For each watermarking method, the attack succeeds once $\Delta_{t_\lambda}$ falls below the effective verification tolerance, and this critical $\lambda$ varies across methods.
This is consistent with our experimental findings (Table~\ref{tab:main_results}): methods with fragile trajectory dependence such as SEAL and SFW are defeated at $\lambda \le 0.15$, while methods that distribute watermark information more deeply, such as Tree-Ring and Gaussian Shading, require $\lambda \ge 0.50$.

Part (ii) sharpens the decoupling statement: the attacked recovered noise $\hat{\boldsymbol{\epsilon}}_{t_\lambda}^a$ is nearly independent of the original watermark noise $\boldsymbol{\epsilon}^w$, as measured by the Wasserstein distance between their joint law and the corresponding product of marginals.
Part (iii) specializes to the terminal step under an additional Gaussianity assumption: the recovered noise becomes nearly as far from the watermark noise as an independent random draw, yielding the random baseline $2d$.
In short, SHIFT defeats verification by destroying trajectory consistency, rather than by crudely degrading the image in pixel space.

\begin{remark}[Why Stochasticity Is Essential]
\label{rem:stochasticity}
The effectiveness of SHIFT does not come from re-noising alone, but from re-noising followed by stochastic resampling.
If Stage II were replaced by a deterministic reverse process such as DDIM, then conditioned on $\mathbf{z}_{t_\lambda}$, the reconstructed latent $\mathbf{z}_0'$ would be uniquely determined:
\[
\mathcal{L}(\mathbf{z}_0' \mid \mathbf{z}_{t_\lambda})
=
\delta_{G_\lambda(\mathbf{z}_{t_\lambda})}
\]
for some deterministic map $G_\lambda$, where $\delta$ is Dirac delta measure.
However, the partially noised latent still contains a nonzero source-dependent component,
\[
\mathbf{z}_{t_\lambda}
=
\sqrt{\bar{\alpha}_{t_\lambda}}\,\mathbf{z}_0
+
\sqrt{1-\bar{\alpha}_{t_\lambda}}\,\boldsymbol{\epsilon},
\]
so a deterministic reverse map can preserve residual dependence on $\mathbf{z}_0$, and hence on the watermark-carrying trajectory.

By contrast, ancestral sampling with $\sigma_t>0$ injects fresh Gaussian noise at every reverse step:
\[
\mathbf{z}_{t-1}
=
\sqrt{\bar{\alpha}_{t-1}}\,\hat{\mathbf{z}}_0
+
\sqrt{1-\bar{\alpha}_{t-1}-\sigma_t^2}\,
\boldsymbol{\epsilon}_\theta(\mathbf{z}_t,t)
+
\sigma_t\boldsymbol{\xi}_t,
\qquad
\boldsymbol{\xi}_t\sim\mathcal{N}(\mathbf{0},\mathbf{I}_d).
\]
As a result, the conditional law $\mathcal{L}(\mathbf{z}_0' \mid \mathbf{z}_{t_\lambda})$ is no longer degenerate: the final reconstruction depends not only on the starting latent $\mathbf{z}_{t_\lambda}$, but also on the $t_\lambda$ fresh $d$-dimensional perturbations $\{\boldsymbol{\xi}_t\}_{t=1}^{t_\lambda}$.
This breaks the one-to-one correspondence between the partially noised latent and the reconstructed reverse trajectory.
Operationally, Stage I suppresses the original trajectory signal, while Stage II replaces it with newly sampled trajectory randomness.
This is precisely why SHIFT succeeds at lower $\lambda$ than attacks based solely on re-noising or deterministic resampling.
\end{remark}

%% file: 5_experiments.tex
\section{Experiments}\label{sec:exp}
We adopt Stable Diffusion v2.1 as the base diffusion model for SHIFT. The evaluation prompts are drawn from the publicly available Stable-Diffusion-Prompts dataset~\cite{Santana2024StableDiffusionPrompts}. Unless otherwise specified, all experiments are conducted on an H100 GPU.

\textbf{Compared methods.} We evaluate SHIFT against nine representative diffusion watermarking methods, including Tree-Ring~\cite{Tree-Ring}, RingID~\cite{ci2024ringid}, PRC~\cite{gunn2024undetectable}, WIND~\cite{arabi2024hidden}, Gaussian Shading~\cite{yang2024gaussian}, GaussMarker~\cite{li2025gaussmarker}, SFW~\cite{lee2025semantic}, SEAL~\cite{arabi2025seal}, and ROBIN~\cite{huang2024robin}. In addition, we compare SHIFT with two existing attack methods, namely black-box attack~\cite{muller2025black} and removing attack~\cite{jain2025forging}, to comprehensively evaluate its attack effectiveness and its impact on image quality.

\textbf{Evaluation metrics.} We use the attack success rate (ASR, $\uparrow$) to measure the ability of an attack to disrupt watermark verification, and adopt the CLIP score ($\uparrow$)~\cite{hessel2021clipscore} and FID ($\downarrow$)~\cite{heusel2017gans} to evaluate the semantic consistency and distribution quality of the attacked images. In further analysis experiments, we also examine the changes in the distance between the recovered noise and the original watermark-carrying noise before and after the attack.

\begin{table}[t]
\centering
\definecolor{vulnhigh}{RGB}{227,242,253}
\definecolor{vulnmid}{RGB}{255,243,224}
\definecolor{vulnlow}{RGB}{232,245,233}

\caption{
\textbf{Overall attack results.}
All values are ASR (\%).}
\label{tab:main_results}
\vspace{1mm}
\setlength{\tabcolsep}{4pt}
\renewcommand{\arraystretch}{1.12}
\small
\begin{tabular}{lcccc}
\toprule
\textbf{Method} & \textbf{Clean} & \textbf{Black-box\cite{muller2025black}} & \textbf{Remove\cite{jain2025forging}} & \textbf{SHIFT} \\
\midrule

\multicolumn{5}{l}{\textit{\small Highly vulnerable} ($\lambda \leq 0.15$)} \\[1pt]
\rowcolor{vulnhigh}
SEAL~\cite{arabi2025seal}                & 0 & 100 & 100 & 98 \\
\rowcolor{vulnhigh}
SFW~\cite{lee2025semantic}               & 0 & 100 & 100 & 100 \\
\rowcolor{vulnhigh}
WIND~\cite{arabi2024hidden}              & 0 & 100 & 100 & 98 \\
\rowcolor{vulnhigh}
PRC~\cite{gunn2024undetectable}          & 0 & 100 & 100 & 97 \\[2pt]

\multicolumn{5}{l}{\textit{\small Moderately robust} ($\lambda = 0.30$)} \\[1pt]
\rowcolor{vulnmid}
RingID~\cite{ci2024ringid}               & 0 & 99 & 95 & 95 \\
\rowcolor{vulnmid}
ROBIN~\cite{huang2024robin}              & 0 & 100 & 79 & 98 \\[2pt]

\multicolumn{5}{l}{\textit{\small Most robust} ($\lambda \geq 0.50$)} \\[1pt]
\rowcolor{vulnlow}
Tree\mbox{-}Ring~\cite{Tree-Ring}        & 0 & 72 & 99 & 98 \\
\rowcolor{vulnlow}
GaussMarker~\cite{li2025gaussmarker}     & 0 & 100 & 55 & 99 \\
\rowcolor{vulnlow}
Gaussian Shading~\cite{yang2024gaussian} & 0 & 100 & 4 & 97 \\
\midrule
\textbf{Average}                         & --- & 96.8 & 81.3 & 97.8 \\
\bottomrule
\end{tabular}

\vspace{-2mm}
\end{table}

\subsection{Attack Performance Against Diffusion Watermarks}

We first evaluate the overall attack capability of SHIFT against different diffusion watermarking methods. Table~\ref{tab:main_results} reports the attack success rate (ASR) on nine representative diffusion watermarking methods, where, for each method, we use the smallest re-noising strength $\lambda$ that successfully bypasses verification. We compare SHIFT with the clean case and two existing attack methods, namely black-box attack and removing attack. The results show that SHIFT consistently achieves strong attack performance across all nine methods, with ASR ranging from 95\% to 100\% and an average ASR of 97.8\%, outperforming both black-box attack (96.8\%) and removing attack (81.3\%). Meanwhile, all methods have an ASR of 0 under the clean setting, indicating that watermark verification is reliable before attack. These results demonstrate that SHIFT not only effectively breaks watermark verification, but also surpasses existing attacks in overall effectiveness.

A closer examination across individual methods further shows that SHIFT is more stable and generalizable than prior attacks. Its advantage is particularly evident on more challenging methods: on Tree-Ring, SHIFT achieves 98\%, substantially higher than the 72\% of black-box attack; on GaussMarker, SHIFT reaches 99\%, clearly outperforming the 55\% of removing attack; and on Gaussian Shading, SHIFT still attains 97\%, whereas removing attack achieves only 4\%. The minimum attack strength required for successful evasion also reveals a clear robustness hierarchy: SEAL, SFW, WIND, and PRC can be defeated with weak perturbation ($\lambda \leq 0.15$), RingID and ROBIN require a moderate attack strength ($\lambda = 0.30$), while Tree-Ring, GaussMarker, and Gaussian Shading are the most robust and require stronger perturbation ($\lambda \geq 0.50$). Nevertheless, SHIFT still achieves near-complete evasion on these more robust methods, highlighting that it is not only stronger than existing attacks, but also more stable and broadly applicable across diverse watermarking paradigms.Qualitative results across all nine watermarking methods are provided in Appendix~\ref{app:experiments} (Figure~\ref{fig:qualitative_all_methods} and Figure~\ref{fig:robustness_compare}).

\begin{figure*}[t]
    \centering
    \includegraphics[width=\textwidth]{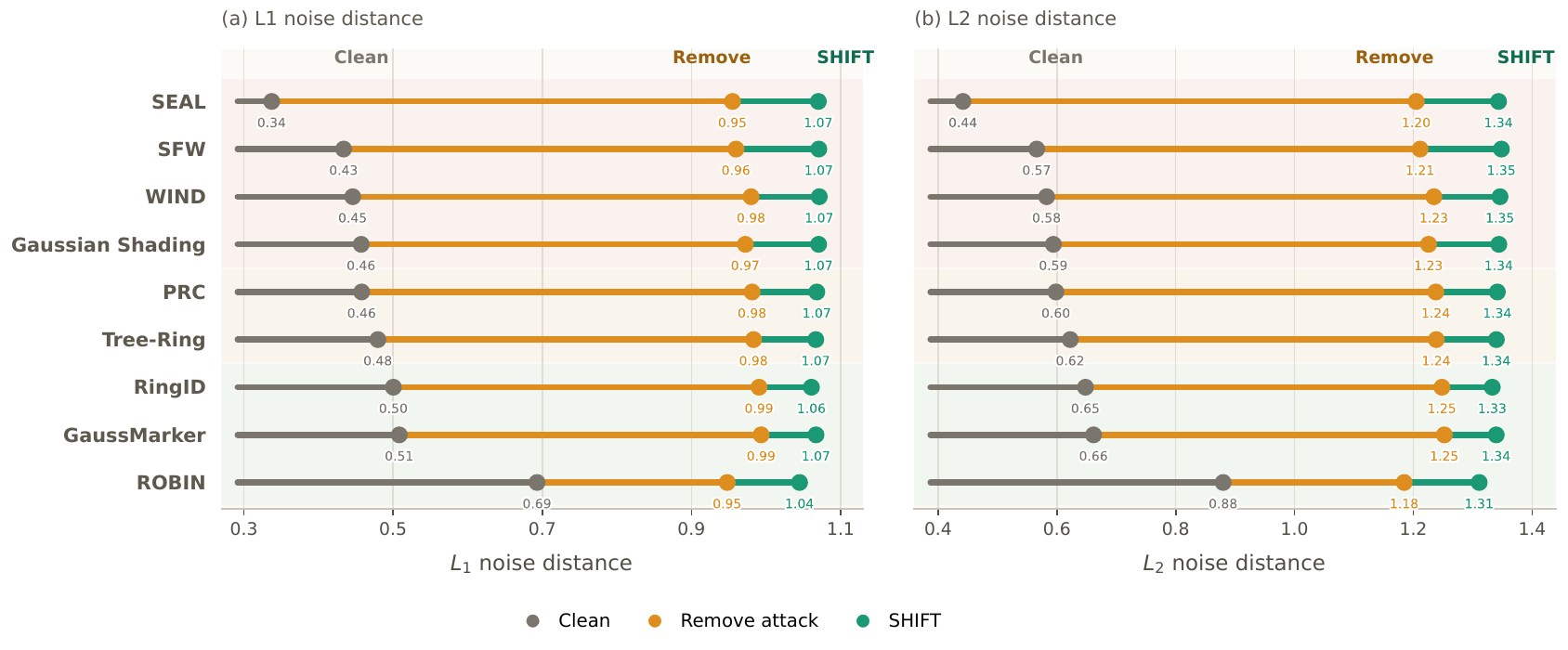}
    \caption{Comparison of $L_1$ and $L_2$ noise distances across nine watermarking methods.}
    \label{fig:noise}
\end{figure*}

\subsection{Trajectory Decoupling Analysis}

To verify whether SHIFT truly achieves decoupling from the original watermark trajectory, we further examine how far the attacked samples deviate from the original watermark-carrying trajectory in latent space. Specifically, for each watermarked image, we perform DDIM inversion on the outputs under the clean setting, removing attack, and SHIFT to recover the corresponding initial noise, and compute the $L_1$ and $L_2$ distances between the recovered noise and the original watermark-carrying noise. Here, the noise distance is not the ultimate object of interest, but rather an empirical proxy for the degree of trajectory decoupling: a larger distance indicates that the attacked sample has moved further away from the original watermark trajectory in latent space, making it harder for the verifier to map the image back to the embedded watermark signal. Figure~\ref{fig:noise} presents the $L_1$ and $L_2$ distances of nine watermarking methods under these three settings.

As shown in Figure~\ref{fig:noise}, SHIFT consistently produces the largest and most stable trajectory displacement across all methods, with highly aligned trends under both $L_1$ and $L_2$. Compared with the relatively small and method-dependent baseline distances under the clean setting, SHIFT pushes the recovered noise of all methods into a markedly higher and more concentrated range. For example, under the clean setting, the $L_1$ distances vary substantially across methods, ranging from 0.3373 for SEAL to 0.6930 for ROBIN, whereas under the strongest attack strength ($\lambda=0.7$), the post-attack distances largely converge to $L_1 \approx 1.05$--1.07 and $L_2 \approx 1.31$--1.34. In contrast, although removing attack also increases the distance, its effect is generally weaker and exhibits noticeably larger variation across methods. Notably, even for ROBIN, which already has a relatively large baseline distance under the clean setting, SHIFT still drives the recovered noise further toward a regime closer to random noise. Overall, these results provide direct empirical support for our central claim: SHIFT succeeds not by applying crude perturbations to image pixels, but by systematically disrupting trajectory consistency through stochastic resampling, thereby effectively decoupling the attacked sample from the original watermark trajectory. The full noise-distance 
curves as a function of $\lambda$ for all nine methods are shown 
in Appendix~\ref{app:noise_distance} 
(Figure~\ref{fig:noise_distance}).

% \begin{table}[t]
% \centering
% \caption{\textbf{Image quality comparison across attack methods.}}
% \label{tab:attack_compare}
% \vspace{1mm}
% \setlength{\tabcolsep}{6pt}
% \renewcommand{\arraystretch}{1.15}
% \small
% \begin{tabular}{lcc}
% \toprule
% \textbf{Method Group} & \textbf{CLIP score $\uparrow$} & \textbf{FID $\downarrow$} \\
% \midrule
% Black-box attack & 31.877 & 106.831 \\
% Forgery attack   & 31.356 & 96.431 \\
% Our attack       & 32.194 & 73.471 \\
% \bottomrule
% \end{tabular}
% \vspace{-2mm}
% \end{table}

\subsection{Image Quality Comparison}

\begin{figure*}[t]
\centering
\renewcommand{\arraystretch}{1.0}

\newlength{\methodcol}
\newlength{\imgcol}
\setlength{\methodcol}{0.08\textwidth}
\setlength{\imgcol}{0.23\textwidth}

\begin{minipage}[c]{\methodcol}\centering\end{minipage}%
\begin{minipage}[c]{\imgcol}\centering \scriptsize Clean \end{minipage}%
\begin{minipage}[c]{\imgcol}\centering \scriptsize SHIFT \end{minipage}%
\begin{minipage}[c]{\imgcol}\centering \scriptsize Black-Box Attack \end{minipage}%
\begin{minipage}[c]{\imgcol}\centering \scriptsize Removing Attack \end{minipage}

\vspace{1.5mm}

\begin{minipage}[c]{\methodcol}\centering
\small \textbf{PRC}
\end{minipage}%
\begin{minipage}[c]{\imgcol}\centering
\zoomfig{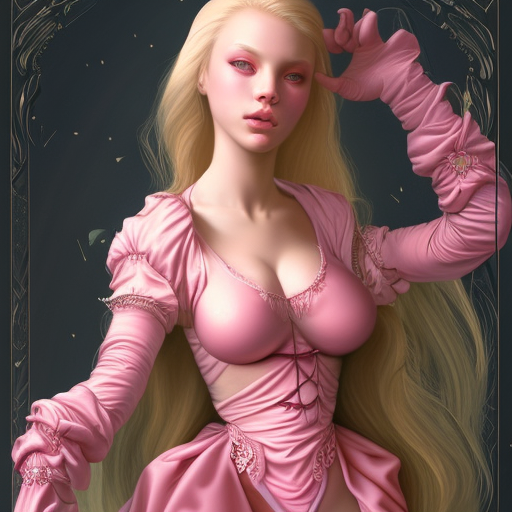}{2.55}{0.8}{0.68\linewidth}{1.25cm}
\end{minipage}%
\begin{minipage}[c]{\imgcol}\centering
\zoomfig{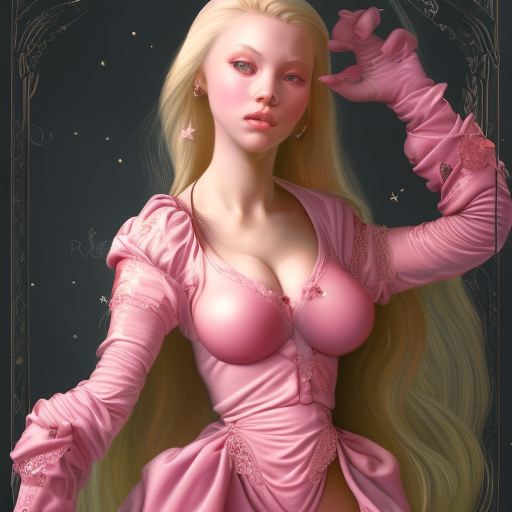}{2.55}{0.8}{0.68\linewidth}{1.25cm}
\end{minipage}%
\begin{minipage}[c]{\imgcol}\centering
\zoomfig{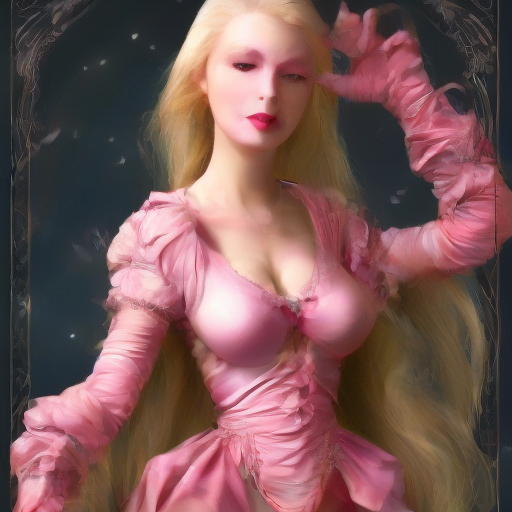}{2.55}{0.8}{0.68\linewidth}{1.25cm}
\end{minipage}%
\begin{minipage}[c]{\imgcol}\centering
\zoomfig{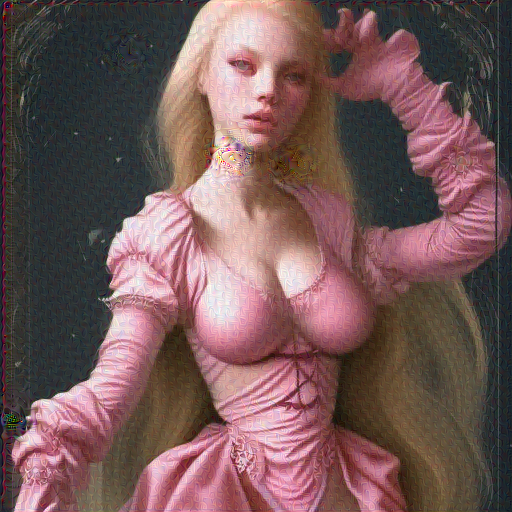}{2.55}{0.8}{0.68\linewidth}{1.25cm}
\end{minipage}

\vspace{1.5mm}

\begin{minipage}[c]{\methodcol}\centering
\small \textbf{ROBIN}
\end{minipage}%
\begin{minipage}[c]{\imgcol}\centering
\zoomfig{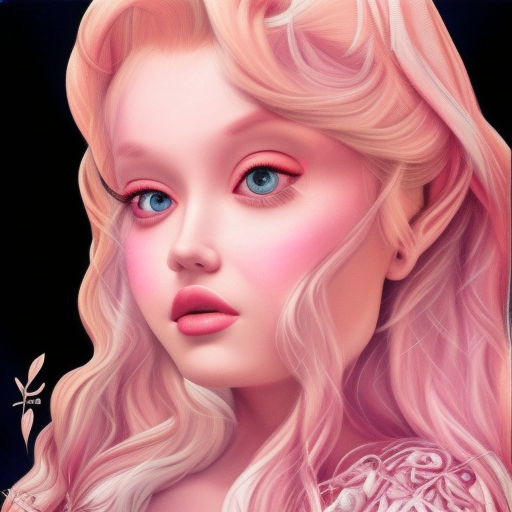}{0.72}{0.62}{0.68\linewidth}{1.25cm}
\end{minipage}%
\begin{minipage}[c]{\imgcol}\centering
\zoomfig{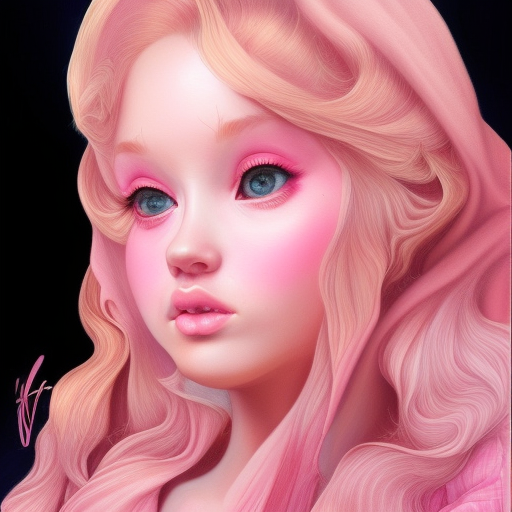}{0.72}{0.62}{0.68\linewidth}{1.25cm}
\end{minipage}%
\begin{minipage}[c]{\imgcol}\centering
\zoomfig{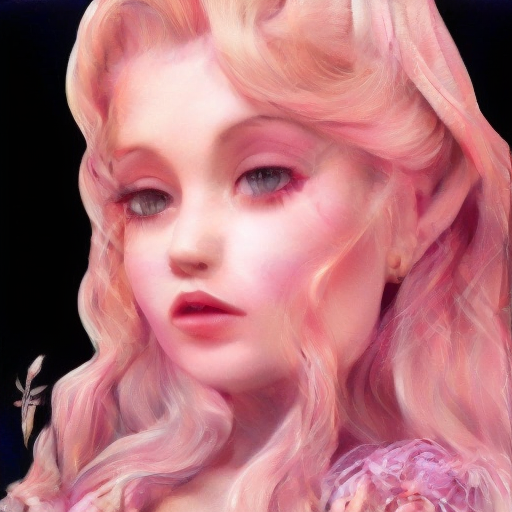}{0.72}{0.62}{0.68\linewidth}{1.25cm}
\end{minipage}%
\begin{minipage}[c]{\imgcol}\centering
\zoomfig{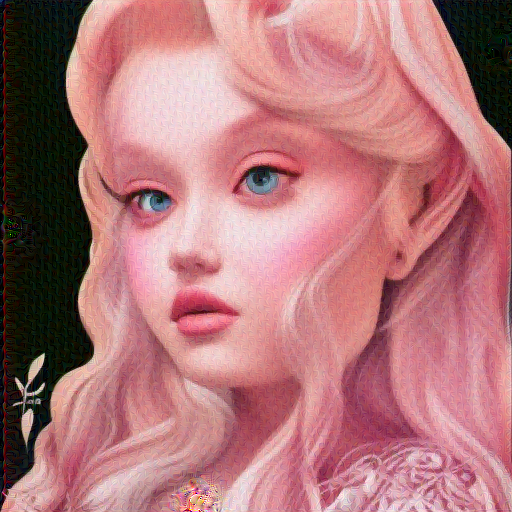}{0.72}{0.62}{0.68\linewidth}{1.25cm}
\end{minipage}

\vspace{1.5mm}

\begin{minipage}[c]{\methodcol}\centering
\small \textbf{SEAL}
\end{minipage}%
\begin{minipage}[c]{\imgcol}\centering
\zoomfig{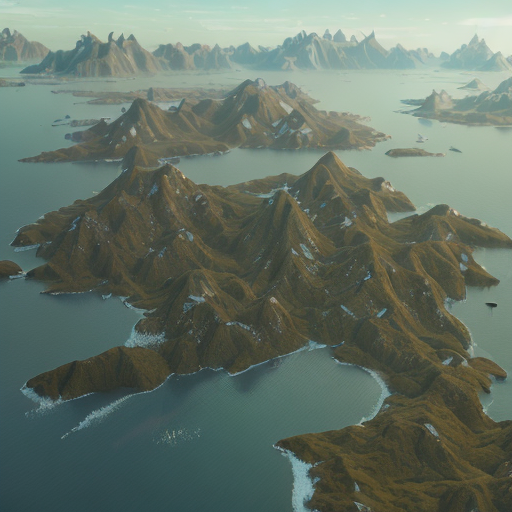}{1.4}{0.8}{0.68\linewidth}{1.25cm}
\end{minipage}%
\begin{minipage}[c]{\imgcol}\centering
\zoomfig{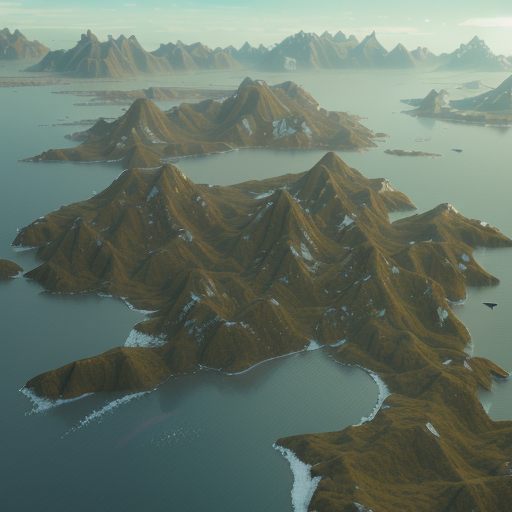}{1.4}{0.8}{0.68\linewidth}{1.25cm}
\end{minipage}%
\begin{minipage}[c]{\imgcol}\centering
\zoomfig{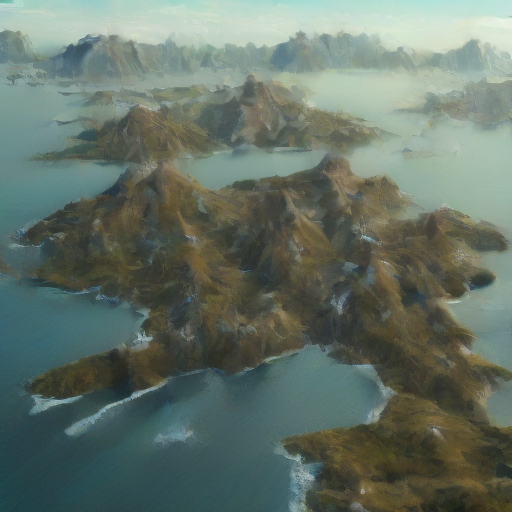}{1.4}{0.8}{0.68\linewidth}{1.25cm}
\end{minipage}%
\begin{minipage}[c]{\imgcol}\centering
\zoomfig{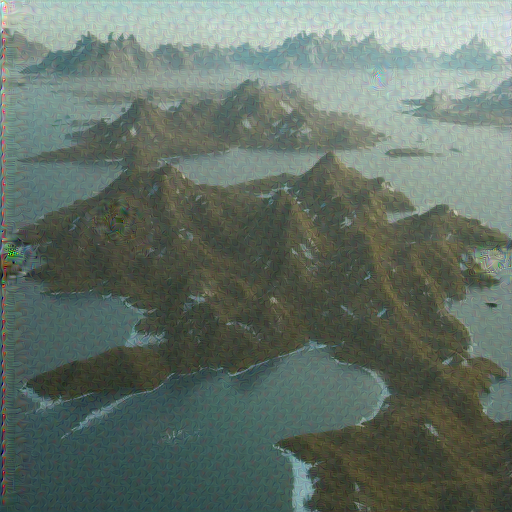}{1.4}{0.8}{0.68\linewidth}{1.25cm}
\end{minipage}

\caption{\textbf{Qualitative comparison with different attack.}}
\label{fig:qualitative_zoom_compare}
\end{figure*}

To verify whether SHIFT preserves image quality while removing watermarks, we compare SHIFT with the black-box attack and the latent-noise removing attack in terms of semantic preservation and distributional realism. The results are summarized in Table~\ref{tab:attack_compare}. SHIFT achieves the highest CLIP score (32.194), indicating that the attacked images maintain the strongest semantic consistency with the original generation prompts. Meanwhile, SHIFT attains an FID of 73.471, representing a relative reduction of 31.2\% and 23.8\% compared to the black-box attack (106.831) and the latent-noise removing attack (96.431), respectively, indicating that its generated images are closest to the real image distribution.

\begin{wraptable}{r}{0.45\textwidth}
\vspace{-4mm}
\centering
\caption{\textbf{Image quality comparison across attack methods.}}
\label{tab:attack_compare}
\vspace{1mm}
\setlength{\tabcolsep}{6pt}
\renewcommand{\arraystretch}{1.15}
\small
\begin{tabular}{lcc}
\toprule
\textbf{Method Group} & \textbf{CLIP score $\uparrow$} & \textbf{FID $\downarrow$} \\
\midrule
Black-box attack & 31.877 & 106.831 \\
Removing attack   & 31.356 & 96.431 \\
Our attack       & \textbf{32.194} & \textbf{73.471} \\
\bottomrule
\end{tabular}
\vspace{-2mm}
\end{wraptable}

This advantage stems from SHIFT's trajectory-level operation mechanism. The stochastic ancestral sampler, guided by the pretrained score function during the reverse denoising process, naturally maintains high visual realism in the reconstructed samples, thereby preserving image quality while disrupting the watermark trajectory. In contrast, the adversarial perturbations introduced by iterative optimization-based attacks degrade visual realism, resulting in significantly higher FID. Figure~\ref{fig:qualitative_zoom_compare} presents a visual comparison of the three attack methods at comparable evasion strength, where SHIFT produces images with noticeably better detail preservation and overall realism. Combined with the results in Table~\ref{tab:attack_compare}, SHIFT achieves near-complete watermark evasion while producing images of higher quality than existing attack methods.

%% file: 6_conclusion.tex
\section{Conclusion and Future Work}\label{sec:conclusion}

In this paper, we identify a critical vulnerability shared by diffusion-based watermarking schemes: their reliance on trajectory consistency between the embedding and verification stages. Exploiting this vulnerability, we propose SHIFT, a training-free, watermark-agnostic attack framework that erases trajectory-specific information through partial forward diffusion and redirects generation onto an entirely new trajectory via stochastic reverse resampling. Extensive experiments on nine representative watermarking methods spanning three paradigms demonstrate that SHIFT achieves 95\%--100\% attack success rates while maintaining the best image quality among all compared attacks, significantly outperforming existing black-box attack and latent-noise removing approaches. A complementary Wasserstein distance analysis confirms that the noise recovered after the attack is approximately independent of the original watermark-carrying noise, providing formal justification for the observed empirical effectiveness.

One limitation is that aggressive re-noising may degrade fine semantic details against particularly robust watermarking methods. This trade-off, however, points to several promising directions for future work. First, it would be valuable to design provenance mechanisms whose verification does not hinge on reconstructing a single diffusion trajectory but instead remains reliable under stochastic trajectory deflection. Second, extending this analysis beyond image diffusion to video diffusion watermarking~\cite{huvideo2026shield,hu2025videomark} presents additional challenges, as temporal coherence and cross-frame consistency impose structural constraints that may both complicate and inform attack strategies. More broadly, the rapid progress in scale-wise autoregressive and hybrid flow-based generation~\cite{tian2024visual,ren2025flowar,cao2026computational,hui2025arflow} raises the question of whether analogous trajectory-level dependencies emerge in generative pipelines that interleave autoregressive prediction with flow matching dynamics. We hope that SHIFT motivates a broader security perspective on AI content provenance, encompassing not only diffusion-based systems but also the growing family of trajectory-structured generative models.

%% file: 10_app.tex
\section{Theoretical Analysis}

We now formalize the trajectory-decoupling effect of SHIFT at an arbitrary attack depth $t_\lambda = \lceil\lambda T\rceil$, and specialize to the terminal step $t_\lambda = T$ to obtain the random-baseline distance.

Throughout the analysis, all expectations are taken over the watermarking randomness $\boldsymbol{\epsilon}^w$ and the fresh attack randomness $(\boldsymbol{\epsilon}, \boldsymbol{\xi}_{1:T})$.

\begin{assumption}[Regularity]
\label{asm:shift_regular}
We assume the followings:
\begin{enumerate}[label=(A\arabic*),nosep]
    \item the encoded latent $\mathbf{z}_0 = \mathcal{E}(\mathbf{x}^w)$ is square-integrable:
    \begin{align*}
    \mathbb{E}\bigl[\|\mathbf{z}_0\|_2^2\bigr] < \infty;
    \end{align*}
    \item for each $t \in \{1,\dots,T\}$, the denoiser $\boldsymbol{\epsilon}_\theta(\cdot,t)$ is globally $L_t$-Lipschitz:
    \begin{align*}
    \|\boldsymbol{\epsilon}_\theta(\mathbf{u},t)-\boldsymbol{\epsilon}_\theta(\mathbf{v},t)\|_2
    \le L_t \|\mathbf{u}-\mathbf{v}\|_2,
    \qquad \forall \mathbf{u},\mathbf{v}\in\mathbb{R}^d;
    \end{align*}
    \item the composition
    \begin{align*}
    Q := \mathcal{I}_{\mathrm{DDIM}} \circ \mathcal{E} \circ \mathcal{D},
    \end{align*}
    where $\mathcal{D}$ decodes a latent to pixel space, $\mathcal{E}$ re-encodes the pixel image to a latent, and $\mathcal{I}_{\mathrm{DDIM}}$ denotes DDIM inversion in latent space under the attacker's public model, is globally $L_Q$-Lipschitz:
    \begin{align*}
    \|Q(\mathbf{u})-Q(\mathbf{v})\|_2 \le L_Q \|\mathbf{u}-\mathbf{v}\|_2,
    \qquad \forall \mathbf{u},\mathbf{v}\in\mathbb{R}^d;
    \end{align*}
    \item the fresh noises $\boldsymbol{\epsilon}\sim\mathcal{N}(\mathbf{0},\mathbf{I}_d)$ and
    $\boldsymbol{\xi}_1,\dots,\boldsymbol{\xi}_T\sim\mathcal{N}(\mathbf{0},\mathbf{I}_d)$ are independent of $\boldsymbol{\epsilon}^w$.
\end{enumerate}
\end{assumption}

We define some useful notations.
For each $t\in\{1,\dots,T\}$, define
\begin{align*}
A_t := \sqrt{\frac{\bar{\alpha}_{t-1}}{\bar{\alpha}_t}},
\qquad
B_t := \sqrt{1-\bar{\alpha}_{t-1}-\sigma_t^2}
-
\sqrt{\frac{\bar{\alpha}_{t-1}(1-\bar{\alpha}_t)}{\bar{\alpha}_t}},
\end{align*}
and let
\begin{align*}
\rho_t := A_t + |B_t|L_t,
\qquad
C_n := \prod_{s=1}^{n}\rho_s,
\qquad n=1,\dots,T.
\end{align*}

For fixed $\boldsymbol{\xi}_{1:n}$ and starting point $\mathbf{u}\in\mathbb{R}^d$, define the $n$-step reverse map $F_n(\mathbf{u};\boldsymbol{\xi}_{1:n})$ recursively by
\begin{align*}
\mathbf{z}^{(\mathbf{u})}_n = \mathbf{u},
\end{align*}
\begin{align*}
\mathbf{z}^{(\mathbf{u})}_{t-1}
=
A_t\,\mathbf{z}^{(\mathbf{u})}_t
+
B_t\,\boldsymbol{\epsilon}_\theta(\mathbf{z}^{(\mathbf{u})}_t,t)
+
\sigma_t\boldsymbol{\xi}_t,
\qquad t=n,n-1,\dots,1,
\end{align*}
and
\begin{align*}
F_n(\mathbf{u};\boldsymbol{\xi}_{1:n}) := \mathbf{z}^{(\mathbf{u})}_0.
\end{align*}
We further define the recovered-noise map
\begin{align*}
R_n(\mathbf{u};\boldsymbol{\xi}_{1:n})
:=
Q\bigl(F_n(\mathbf{u};\boldsymbol{\xi}_{1:n})\bigr).
\end{align*}

\begin{lemma}[One-step stability of the ancestral update]
\label{lem:shift_one_step}
For every $t\in\{1,\dots,T\}$, every $\mathbf{u},\mathbf{v}\in\mathbb{R}^d$, and every $\boldsymbol{\xi}\in\mathbb{R}^d$,
\begin{align*}
\bigl\|
(A_t\mathbf{u} + B_t\boldsymbol{\epsilon}_\theta(\mathbf{u},t) + \sigma_t\boldsymbol{\xi})
-
(A_t\mathbf{v} + B_t\boldsymbol{\epsilon}_\theta(\mathbf{v},t) + \sigma_t\boldsymbol{\xi})
\bigr\|_2
\le
\rho_t \|\mathbf{u}-\mathbf{v}\|_2.
\end{align*}
\end{lemma}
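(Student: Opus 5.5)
The argument is a direct computation: cancel the shared noise term, then apply the triangle inequality together with assumption (A2).

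First, subtract the two updates. The stochastic term $\sigma_t\boldsymbol{\xi}$ appears identically in both, so it cancels, leaving
\begin{align*}
A_t(\mathbf{u}-\mathbf{v}) + B_t\bigl(\boldsymbol{\epsilon}_\theta(\mathbf{u},t)-\boldsymbol{\epsilon}_\theta(\mathbf{v},t)\bigr).
\end{align*}

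Second, bound the norm of this difference. By the triangle inequality and absolute homogeneity of $\|\cdot\|_2$, it is at most
\begin{align*}
|A_t|\,\|\mathbf{u}-\mathbf{v}\|_2 + |B_t|\,\|\boldsymbol{\epsilon}_\theta(\mathbf{u},t)-\boldsymbol{\epsilon}_\theta(\mathbf{v},t)\|_2 .
\end{align*}
Since $\bar{\alpha}_{t-1},\bar{\alpha}_t>0$, we have $A_t=\sqrt{\bar{\alpha}_{t-1}/\bar{\alpha}_t}\ge 0$, so $|A_t|=A_t$. Assumption (A2) gives the global Lipschitz bound $L_t\|\mathbf{u}-\mathbf{v}\|_2$ on the second factor. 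Combining these yields $(A_t+|B_t|L_t)\|\mathbf{u}-\mathbf{v}\|_2=\rho_t\|\mathbf{u}-\mathbf{v}\|_2$.

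The only point needing care is that the update is exactly affine in $\mathbf{z}_t$ and $\boldsymbol{\epsilon}_\theta(\mathbf{z}_t,t)$ with coefficients $A_t$ and $B_t$. To confirm this, substitute the clean-latent prediction \eqref{eq:predict_clean} into the ancestral update \eqref{eq:reverse}. The coefficient of $\mathbf{z}_t$ becomes $\sqrt{\bar{\alpha}_{t-1}/\bar{\alpha}_t}$. The coefficient of $\boldsymbol{\epsilon}_\theta$ becomes $\sqrt{1-\bar{\alpha}_{t-1}-\sigma_t^2}-\sqrt{\bar{\alpha}_{t-1}(1-\bar{\alpha}_t)/\bar{\alpha}_t}$. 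These match the definitions of $A_t$ and $B_t$ above. The lemma is stated directly for the $A_t,B_t$ form, so this check is not logically needed, but it shows the lemma applies to the actual sampler.

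There is no real obstacle. The one implicit requirement is that $1-\bar{\alpha}_{t-1}-\sigma_t^2\ge 0$, so that $B_t$ is real; this is already presupposed by the definition of $B_t$.
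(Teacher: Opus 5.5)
Your proof is correct and matches the paper's: you cancel the common $\sigma_t\boldsymbol{\xi}$ term, apply the triangle inequality, and invoke the Lipschitz bound (A2) to obtain the factor $A_t+|B_t|L_t=\rho_t$. Your extra checks — that $A_t\ge 0$, and that $A_t,B_t$ come from substituting the clean-latent prediction into the ancestral update — are not in the paper's proof but are accurate and harmless.
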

\begin{proof}
By cancellation of the common noise term $\sigma_t\boldsymbol{\xi}$,
\begin{align*}
&\bigl\|
A_t\mathbf{u} + B_t\boldsymbol{\epsilon}_\theta(\mathbf{u},t)
-
A_t\mathbf{v} - B_t\boldsymbol{\epsilon}_\theta(\mathbf{v},t)
\bigr\|_2 \\
&\qquad\le
A_t\|\mathbf{u}-\mathbf{v}\|_2
+
|B_t|\,
\|\boldsymbol{\epsilon}_\theta(\mathbf{u},t)-\boldsymbol{\epsilon}_\theta(\mathbf{v},t)\|_2.
\end{align*}
Applying Assumption~\ref{asm:shift_regular}(A2) gives
\begin{align*}
\|\boldsymbol{\epsilon}_\theta(\mathbf{u},t)-\boldsymbol{\epsilon}_\theta(\mathbf{v},t)\|_2
\le L_t\|\mathbf{u}-\mathbf{v}\|_2.
\end{align*}
Hence, we have
\begin{align*}
&~  \bigl\|
A_t\mathbf{u} + B_t\boldsymbol{\epsilon}_\theta(\mathbf{u},t) + \sigma_t\boldsymbol{\xi}
-
A_t\mathbf{v} - B_t\boldsymbol{\epsilon}_\theta(\mathbf{v},t) - \sigma_t\boldsymbol{\xi}
\bigr\|_2
\\
\le &~
(A_t+|B_t|L_t)\|\mathbf{u}-\mathbf{v}\|_2
\\
= &~
\rho_t\|\mathbf{u}-\mathbf{v}\|_2. 
\end{align*}
\end{proof}

\begin{lemma}[Multi-step stability]
\label{lem:shift_multistep}
For every $n\in\{1,\dots,T\}$, every $\mathbf{u},\mathbf{v}\in\mathbb{R}^d$, and every realization of $\boldsymbol{\xi}_{1:n}$,
\begin{align*}
\|F_n(\mathbf{u};\boldsymbol{\xi}_{1:n}) - F_n(\mathbf{v};\boldsymbol{\xi}_{1:n})\|_2
\le
C_n \|\mathbf{u}-\mathbf{v}\|_2.
\end{align*}
\end{lemma}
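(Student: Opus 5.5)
The plan is a finite induction on the number of reverse steps, driven entirely by Lemma~\ref{lem:shift_one_step}. Fix $n$, the starting points $\mathbf{u},\mathbf{v}$, and a realization $\boldsymbol{\xi}_{1:n}$. Run the recursion defining $F_n$ from both starting points with the \emph{same} noise sequence, producing trajectories $\mathbf{z}^{(\mathbf{u})}_t$ and $\mathbf{z}^{(\mathbf{v})}_t$ for $t=n,\dots,0$. Write $e_t := \|\mathbf{z}^{(\mathbf{u})}_t-\mathbf{z}^{(\mathbf{v})}_t\|_2$. The starting condition gives $e_n=\|\mathbf{u}-\mathbf{v}\|_2$, and the target quantity is $e_0$.

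The key step is to pass from step $t$ to step $t-1$. Both trajectories use the update $\mathbf{w}\mapsto A_t\mathbf{w}+B_t\boldsymbol{\epsilon}_\theta(\mathbf{w},t)+\sigma_t\boldsymbol{\xi}_t$ with an identical $\boldsymbol{\xi}_t$. Applying Lemma~\ref{lem:shift_one_step} with $\boldsymbol{\xi}=\boldsymbol{\xi}_t$ to the pair $(\mathbf{z}^{(\mathbf{u})}_t,\mathbf{z}^{(\mathbf{v})}_t)$ therefore yields $e_{t-1}\le\rho_t e_t$. Iterating this inequality down from $t=n$ to $t=1$ gives
\begin{align*}
e_0\le \Bigl(\prod_{s=1}^{n}\rho_s\Bigr)e_n = C_n\|\mathbf{u}-\mathbf{v}\|_2 .
\end{align*}
Formally, I would prove by downward induction on $k$ that $e_k\le\bigl(\prod_{s=k+1}^{n}\rho_s\bigr)e_n$, taking the empty product to be $1$. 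The base case $k=n$ is immediate, and the inductive step is the one-step bound above.

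I do not expect a real obstacle here. The only points needing care are bookkeeping ones. First, the product defining $C_n$ must run over exactly the steps $1,\dots,n$ actually used by $F_n$; it does, since $F_n$ starts at index $n$. Second, the noise terms must be matched step by step so that they cancel. This is why the lemma is stated for each fixed realization of $\boldsymbol{\xi}_{1:n}$, which amounts to a synchronous coupling of the two trajectories. Nonnegativity of each $\rho_t$, which follows from $A_t>0$ and $L_t\ge0$, is what lets the inequalities be multiplied along the chain.
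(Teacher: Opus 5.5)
Your proposal is correct and follows the paper's proof essentially verbatim: drive the two trajectories with the same noises $\boldsymbol{\xi}_{1:n}$, apply Lemma~\ref{lem:shift_one_step} at each step to get $e_{t-1}\le\rho_t e_t$, and iterate from $t=n$ down to $t=1$ to obtain $C_n\|\mathbf{u}-\mathbf{v}\|_2$. Your explicit downward induction and your note that $\rho_t\ge 0$ justifies chaining the inequalities are slightly more careful bookkeeping than the paper gives, but the argument is the same.
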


\begin{proof}
Let $\mathbf{z}^{(\mathbf{u})}_t$ and $\mathbf{z}^{(\mathbf{v})}_t$ denote the two reverse trajectories started from $\mathbf{u}$ and $\mathbf{v}$ and driven by the same noises $\boldsymbol{\xi}_{1:n}$.
By Lemma~\ref{lem:shift_one_step},
\begin{align*}
\|\mathbf{z}^{(\mathbf{u})}_{t-1} - \mathbf{z}^{(\mathbf{v})}_{t-1}\|_2
\le
\rho_t
\|\mathbf{z}^{(\mathbf{u})}_{t} - \mathbf{z}^{(\mathbf{v})}_{t}\|_2,
\qquad t=n,n-1,\dots,1.
\end{align*}
Iterating this inequality from $t=n$ down to $t=1$ yields
\begin{align*}
\|\mathbf{z}^{(\mathbf{u})}_{0} - \mathbf{z}^{(\mathbf{v})}_{0}\|_2
\le
\Bigl(\prod_{s=1}^{n}\rho_s\Bigr)
\|\mathbf{u}-\mathbf{v}\|_2
=
C_n\|\mathbf{u}-\mathbf{v}\|_2.
\end{align*}
Since $F_n(\mathbf{u};\boldsymbol{\xi}_{1:n})=\mathbf{z}^{(\mathbf{u})}_0$ and
$F_n(\mathbf{v};\boldsymbol{\xi}_{1:n})=\mathbf{z}^{(\mathbf{v})}_0$, the claim follows.
\end{proof}

\begin{lemma}[Stability of the recovered-noise map]
\label{lem:shift_recovered_stability}
For every $n\in\{1,\dots,T\}$, every $\mathbf{u},\mathbf{v}\in\mathbb{R}^d$, and every realization of $\boldsymbol{\xi}_{1:n}$,
\begin{align*}
\|R_n(\mathbf{u};\boldsymbol{\xi}_{1:n}) - R_n(\mathbf{v};\boldsymbol{\xi}_{1:n})\|_2
\le
L_Q C_n \|\mathbf{u}-\mathbf{v}\|_2.
\end{align*}
\end{lemma}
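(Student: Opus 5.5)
This lemma follows by composing the two Lipschitz estimates already in hand. Fix $n\in\{1,\dots,T\}$, points $\mathbf{u},\mathbf{v}\in\mathbb{R}^d$, and a realization of $\boldsymbol{\xi}_{1:n}$. By definition, $R_n(\mathbf{u};\boldsymbol{\xi}_{1:n}) = Q\bigl(F_n(\mathbf{u};\boldsymbol{\xi}_{1:n})\bigr)$, and similarly for $\mathbf{v}$. So the difference of the recovered noises is the difference of $Q$ evaluated at the two reverse-map outputs. The noise realization is the same in both terms, and it enters only through $F_n$, because $Q$ is a deterministic map.

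The first step applies Assumption~\ref{asm:shift_regular}(A3) with the two latents $F_n(\mathbf{u};\boldsymbol{\xi}_{1:n})$ and $F_n(\mathbf{v};\boldsymbol{\xi}_{1:n})$, which gives
\begin{align*}
\|R_n(\mathbf{u};\boldsymbol{\xi}_{1:n}) - R_n(\mathbf{v};\boldsymbol{\xi}_{1:n})\|_2
\le L_Q\,\|F_n(\mathbf{u};\boldsymbol{\xi}_{1:n}) - F_n(\mathbf{v};\boldsymbol{\xi}_{1:n})\|_2 .
\end{align*}
The second step invokes Lemma~\ref{lem:shift_multistep} for the same $n$ and the same noise realization. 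It bounds the right-hand norm by $C_n\|\mathbf{u}-\mathbf{v}\|_2$. Chaining the two inequalities yields the claimed constant $L_Q C_n$.

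I expect no real obstacle here. The only points to check are bookkeeping. The Lipschitz property of $Q$ is global, so it applies to arbitrary arguments, including ones that depend on $\boldsymbol{\xi}_{1:n}$. Lemma~\ref{lem:shift_multistep} holds pointwise for every fixed noise realization, so the common-noise coupling is exactly what makes the bound valid realization by realization.
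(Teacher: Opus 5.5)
Your proposal is correct and matches the paper's proof: you apply the global $L_Q$-Lipschitz property (A3) of $Q$ to the two reverse-map outputs $F_n(\mathbf{u};\boldsymbol{\xi}_{1:n})$ and $F_n(\mathbf{v};\boldsymbol{\xi}_{1:n})$. You then bound their distance by $C_n\|\mathbf{u}-\mathbf{v}\|_2$ using Lemma~\ref{lem:shift_multistep} with the same noise realization.
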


\begin{proof}
By definition,
\begin{align*}
R_n(\mathbf{u};\boldsymbol{\xi}_{1:n})
=
Q\bigl(F_n(\mathbf{u};\boldsymbol{\xi}_{1:n})\bigr),
\qquad
R_n(\mathbf{v};\boldsymbol{\xi}_{1:n})
=
Q\bigl(F_n(\mathbf{v};\boldsymbol{\xi}_{1:n})\bigr).
\end{align*}
Assumption~\ref{asm:shift_regular}(A3) implies
\begin{align*}
\|R_n(\mathbf{u};\boldsymbol{\xi}_{1:n}) - R_n(\mathbf{v};\boldsymbol{\xi}_{1:n})\|_2
\le
L_Q
\|F_n(\mathbf{u};\boldsymbol{\xi}_{1:n}) - F_n(\mathbf{v};\boldsymbol{\xi}_{1:n})\|_2.
\end{align*}
Applying Lemma~\ref{lem:shift_multistep} gives
\begin{align*}
\|R_n(\mathbf{u};\boldsymbol{\xi}_{1:n}) - R_n(\mathbf{v};\boldsymbol{\xi}_{1:n})\|_2
\le
L_Q C_n \|\mathbf{u}-\mathbf{v}\|_2. 
\end{align*}
\end{proof}

\begin{lemma}[Decoupling from the source latent]
\label{lem:shift_terminal_decoupling}
For each $n\in\{1,\dots,T\}$, define
\begin{align*}
\hat{\boldsymbol{\epsilon}}_n^a
:=
R_n\bigl(\sqrt{\bar{\alpha}_n}\,\mathbf{z}_0
+
\sqrt{1-\bar{\alpha}_n}\,\boldsymbol{\epsilon};\boldsymbol{\xi}_{1:n}\bigr),
\qquad
\tilde{\boldsymbol{\epsilon}}_n
:=
R_n\bigl(\sqrt{1-\bar{\alpha}_n}\,\boldsymbol{\epsilon};\boldsymbol{\xi}_{1:n}\bigr).
\end{align*}
Then $\tilde{\boldsymbol{\epsilon}}_n$ is independent of $\boldsymbol{\epsilon}^w$, and
\begin{align*}
\mathbb{E}\bigl[
\|\hat{\boldsymbol{\epsilon}}_n^a - \tilde{\boldsymbol{\epsilon}}_n\|_2^2
\bigr]
\le
L_Q^2 C_n^2 \bar{\alpha}_n\,
\mathbb{E}\bigl[\|\mathbf{z}_0\|_2^2\bigr].
\end{align*}
\end{lemma}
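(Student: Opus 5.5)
The argument has two independent parts: an independence claim that follows from Assumption~\ref{asm:shift_regular}(A4), and a mean-square bound that follows from Lemma~\ref{lem:shift_recovered_stability} applied pathwise, followed by taking expectations.

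\emph{Independence.} By definition,
\[
\tilde{\boldsymbol{\epsilon}}_n = R_n\bigl(\sqrt{1-\bar{\alpha}_n}\,\boldsymbol{\epsilon};\boldsymbol{\xi}_{1:n}\bigr),
\]
which is a deterministic measurable function of $(\boldsymbol{\epsilon},\boldsymbol{\xi}_{1:n})$ alone. Measurability holds because $R_n$ is built from the continuous maps $\boldsymbol{\epsilon}_\theta(\cdot,t)$ and $Q$; these are Lipschitz by (A2) and (A3), and $R_n$ is jointly continuous in $(\mathbf{u},\boldsymbol{\xi}_{1:n})$ since the noise enters affinely at each step. By (A4), the pair $(\boldsymbol{\epsilon},\boldsymbol{\xi}_{1:n})$ is independent of $\boldsymbol{\epsilon}^w$. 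A measurable function of a random vector independent of $\boldsymbol{\epsilon}^w$ is itself independent of $\boldsymbol{\epsilon}^w$, which gives the first claim.

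\emph{Pathwise bound.} Fix any realization of $(\mathbf{z}_0,\boldsymbol{\epsilon},\boldsymbol{\xi}_{1:n})$ and set
\[
\mathbf{u}=\sqrt{\bar{\alpha}_n}\,\mathbf{z}_0+\sqrt{1-\bar{\alpha}_n}\,\boldsymbol{\epsilon},
\qquad
\mathbf{v}=\sqrt{1-\bar{\alpha}_n}\,\boldsymbol{\epsilon}.
\]
Both trajectories are driven by the same noise sequence $\boldsymbol{\xi}_{1:n}$, so Lemma~\ref{lem:shift_recovered_stability} applies and gives
\[
\|\hat{\boldsymbol{\epsilon}}_n^a-\tilde{\boldsymbol{\epsilon}}_n\|_2 \le L_Q C_n\|\mathbf{u}-\mathbf{v}\|_2 = L_Q C_n\sqrt{\bar{\alpha}_n}\,\|\mathbf{z}_0\|_2.
\]
Squaring yields
\[
\|\hat{\boldsymbol{\epsilon}}_n^a-\tilde{\boldsymbol{\epsilon}}_n\|_2^2 \le L_Q^2 C_n^2\bar{\alpha}_n\|\mathbf{z}_0\|_2^2
\]
for every realization.

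\emph{Taking expectations.} The constants $L_Q$, $C_n$, $\bar{\alpha}_n$ are deterministic, so taking expectations of both sides over all the randomness gives the stated inequality by monotonicity and linearity of expectation. The right-hand side is finite by (A1).

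The only step needing care is the use of common random numbers. The coupling is legitimate only because $\hat{\boldsymbol{\epsilon}}_n^a$ and $\tilde{\boldsymbol{\epsilon}}_n$ are defined with the same $\boldsymbol{\epsilon}$ and the same $\boldsymbol{\xi}_{1:n}$. This is what makes the noise terms cancel, and hence what allows Lemma~\ref{lem:shift_recovered_stability} to be invoked with a fixed noise realization. No independence between $\mathbf{z}_0$ and the attack noise is needed for the bound, since it holds pointwise.
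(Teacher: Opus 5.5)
Your proof is correct and follows the paper's argument essentially verbatim. You get independence of $\tilde{\boldsymbol{\epsilon}}_n$ from $\boldsymbol{\epsilon}^w$ via (A4), since it is a measurable function of $(\boldsymbol{\epsilon},\boldsymbol{\xi}_{1:n})$ alone, and you apply Lemma~\ref{lem:shift_recovered_stability} pathwise under common noise with $\mathbf{u}-\mathbf{v}=\sqrt{\bar{\alpha}_n}\,\mathbf{z}_0$, then square and take expectations; your extra remarks on measurability (joint continuity of $R_n$) and on the bound holding pointwise are accurate refinements that the paper leaves implicit.
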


\begin{proof}
By construction, $\tilde{\boldsymbol{\epsilon}}_n$ is a measurable function only of
$(\boldsymbol{\epsilon},\boldsymbol{\xi}_{1:n})$ and the public model, and Assumption~\ref{asm:shift_regular}(A4) states that $(\boldsymbol{\epsilon},\boldsymbol{\xi}_{1:n})$ is independent of $\boldsymbol{\epsilon}^w$.
Hence $\tilde{\boldsymbol{\epsilon}}_n$ is independent of $\boldsymbol{\epsilon}^w$.

For the norm bound, apply Lemma~\ref{lem:shift_recovered_stability} with
\begin{align*}
\mathbf{u}
=
\sqrt{\bar{\alpha}_n}\,\mathbf{z}_0 + \sqrt{1-\bar{\alpha}_n}\,\boldsymbol{\epsilon},
\qquad
\mathbf{v}
=
\sqrt{1-\bar{\alpha}_n}\,\boldsymbol{\epsilon}.
\end{align*}
Then
$\mathbf{u}-\mathbf{v}
=
\sqrt{\bar{\alpha}_n}\,\mathbf{z}_0$,
and therefore
\begin{align*}
\|\hat{\boldsymbol{\epsilon}}_n^a - \tilde{\boldsymbol{\epsilon}}_n\|_2
\le
L_Q C_n \sqrt{\bar{\alpha}_n}\,\|\mathbf{z}_0\|_2.
\end{align*}
Squaring both sides and taking expectations yields the claim.
\end{proof}

\begin{lemma}[Distance to the independent product law]
\label{lem:shift_product_law}
Under Assumption~\ref{asm:shift_regular}, for every $n\in\{1,\dots,T\}$,
\begin{align*}
W_2\left(
\mathcal{L}(\hat{\boldsymbol{\epsilon}}_n^a,\boldsymbol{\epsilon}^w),
\mathcal{L}(\tilde{\boldsymbol{\epsilon}}_n)\otimes \mathcal{L}(\boldsymbol{\epsilon}^w)
\right)
\le
L_Q C_n \sqrt{\bar{\alpha}_n\,\mathbb{E}\bigl[\|\mathbf{z}_0\|_2^2\bigr]}.
\end{align*}
Consequently,
\begin{align*}
W_2\left(
\mathcal{L}(\hat{\boldsymbol{\epsilon}}_n^a,\boldsymbol{\epsilon}^w),
\mathcal{L}(\hat{\boldsymbol{\epsilon}}_n^a)\otimes \mathcal{L}(\boldsymbol{\epsilon}^w)
\right)
\le
2L_Q C_n \sqrt{\bar{\alpha}_n\,\mathbb{E}\bigl[\|\mathbf{z}_0\|_2^2\bigr]}.
\end{align*}
\end{lemma}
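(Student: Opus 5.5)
The plan is a coupling argument that builds on Lemma~\ref{lem:shift_terminal_decoupling}. For the first bound, I would construct one explicit coupling of the two laws. On the common probability space carrying $(\boldsymbol{\epsilon}^w,\mathbf{z}_0,\boldsymbol{\epsilon},\boldsymbol{\xi}_{1:n})$, pair $(\hat{\boldsymbol{\epsilon}}_n^a,\boldsymbol{\epsilon}^w)$ with $(\tilde{\boldsymbol{\epsilon}}_n,\boldsymbol{\epsilon}^w)$.

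By Lemma~\ref{lem:shift_terminal_decoupling}, $\tilde{\boldsymbol{\epsilon}}_n$ is independent of $\boldsymbol{\epsilon}^w$. Hence the second pair has law exactly $\mathcal{L}(\tilde{\boldsymbol{\epsilon}}_n)\otimes\mathcal{L}(\boldsymbol{\epsilon}^w)$, so this is an admissible coupling. Its squared transport cost is
\[
\mathbb{E}\bigl[\|\hat{\boldsymbol{\epsilon}}_n^a-\tilde{\boldsymbol{\epsilon}}_n\|_2^2+\|\boldsymbol{\epsilon}^w-\boldsymbol{\epsilon}^w\|_2^2\bigr],
\]
which is at most $L_Q^2C_n^2\bar{\alpha}_n\mathbb{E}[\|\mathbf{z}_0\|_2^2]$ by the same lemma. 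Taking square roots gives the first inequality. Finite second moments, needed for $W_2$ to be finite, follow from (A1) and the Lipschitz bounds.

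For the consequence, I would apply the triangle inequality for $W_2$:
\[
W_2\bigl(\mathcal{L}(\hat{\boldsymbol{\epsilon}}^a_n,\boldsymbol{\epsilon}^w),\mathcal{L}(\hat{\boldsymbol{\epsilon}}^a_n)\otimes\mathcal{L}(\boldsymbol{\epsilon}^w)\bigr)\le W_2\bigl(\mathcal{L}(\hat{\boldsymbol{\epsilon}}^a_n,\boldsymbol{\epsilon}^w),\mathcal{L}(\tilde{\boldsymbol{\epsilon}}_n)\otimes\mathcal{L}(\boldsymbol{\epsilon}^w)\bigr)+W_2\bigl(\mathcal{L}(\tilde{\boldsymbol{\epsilon}}_n)\otimes\mathcal{L}(\boldsymbol{\epsilon}^w),\mathcal{L}(\hat{\boldsymbol{\epsilon}}^a_n)\otimes\mathcal{L}(\boldsymbol{\epsilon}^w)\bigr).
\]
The first term is handled by the bound just proved. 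For the second term, I would build a coupling of the two product laws using the common copy of $\boldsymbol{\epsilon}^w$ for both second coordinates.

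The one point needing care is that the first coordinates must be independent of that shared $\boldsymbol{\epsilon}^w$ in both laws, while the coupling must still be close. To get this, take an independent copy $(\mathbf{z}_0',\boldsymbol{\epsilon}',\boldsymbol{\xi}'_{1:n})$ of the attack variables, with $\mathbf{z}_0'$ drawn from its own marginal and made independent of $\boldsymbol{\epsilon}^w$. Then form $\hat{\boldsymbol{\epsilon}}'^a_n$ and $\tilde{\boldsymbol{\epsilon}}'_n$ from this copy. The pair $\bigl((\tilde{\boldsymbol{\epsilon}}'_n,\boldsymbol{\epsilon}^w),(\hat{\boldsymbol{\epsilon}}'^a_n,\boldsymbol{\epsilon}^w)\bigr)$ has the required product marginals. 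Its cost equals $\mathbb{E}\|\hat{\boldsymbol{\epsilon}}'^a_n-\tilde{\boldsymbol{\epsilon}}'_n\|^2$, and by equality in law with the original variables this equals the same quantity bounded in Lemma~\ref{lem:shift_terminal_decoupling}.

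So the second term is also at most $L_QC_n\sqrt{\bar{\alpha}_n\mathbb{E}\|\mathbf{z}_0\|^2}$. Adding the two terms gives the factor $2$. This independent-copy construction is the main subtlety I expect; the rest is routine.
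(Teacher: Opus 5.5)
Your proposal is correct and follows the paper's proof: the same coupling $\bigl((\hat{\boldsymbol{\epsilon}}_n^a,\boldsymbol{\epsilon}^w),(\tilde{\boldsymbol{\epsilon}}_n,\boldsymbol{\epsilon}^w)\bigr)$ gives the first bound, and the same triangle-inequality split gives the consequence. Your independent-copy construction for the term $W_2\bigl(\mathcal{L}(\tilde{\boldsymbol{\epsilon}}_n)\otimes\mathcal{L}(\boldsymbol{\epsilon}^w),\mathcal{L}(\hat{\boldsymbol{\epsilon}}_n^a)\otimes\mathcal{L}(\boldsymbol{\epsilon}^w)\bigr)$ makes rigorous what the paper only asserts as ``bounded in the same way as the first''; the original coupling cannot be reused verbatim there, since $(\hat{\boldsymbol{\epsilon}}_n^a,\boldsymbol{\epsilon}^w)$ is not product-distributed.
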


\begin{proof}
Because $\tilde{\boldsymbol{\epsilon}}_n$ is independent of $\boldsymbol{\epsilon}^w$ by Lemma~\ref{lem:shift_terminal_decoupling},
\begin{align*}
\mathcal{L}(\tilde{\boldsymbol{\epsilon}}_n,\boldsymbol{\epsilon}^w)
=
\mathcal{L}(\tilde{\boldsymbol{\epsilon}}_n)\otimes \mathcal{L}(\boldsymbol{\epsilon}^w).
\end{align*}
Now consider the coupling on the underlying probability space given by
\begin{align*}
\bigl((\hat{\boldsymbol{\epsilon}}_n^a,\boldsymbol{\epsilon}^w),(\tilde{\boldsymbol{\epsilon}}_n,\boldsymbol{\epsilon}^w)\bigr).
\end{align*}
Its second marginal is precisely
$\mathcal{L}(\tilde{\boldsymbol{\epsilon}}_n)\otimes \mathcal{L}(\boldsymbol{\epsilon}^w)$.
Hence
\begin{align*}
W_2^2\left(
\mathcal{L}(\hat{\boldsymbol{\epsilon}}_n^a,\boldsymbol{\epsilon}^w),
\mathcal{L}(\tilde{\boldsymbol{\epsilon}}_n)\otimes \mathcal{L}(\boldsymbol{\epsilon}^w)
\right)
&\le
\mathbb{E}\bigl[
\|(\hat{\boldsymbol{\epsilon}}_n^a,\boldsymbol{\epsilon}^w)-(\tilde{\boldsymbol{\epsilon}}_n,\boldsymbol{\epsilon}^w)\|_2^2
\bigr] \\
&=
\mathbb{E}\bigl[
\|\hat{\boldsymbol{\epsilon}}_n^a-\tilde{\boldsymbol{\epsilon}}_n\|_2^2
\bigr].
\end{align*}
Applying Lemma~\ref{lem:shift_terminal_decoupling} yields the first bound.

For the second bound, use the triangle inequality for $W_2$:
\begin{align*}
 &W_2\left(
\mathcal{L}(\hat{\boldsymbol{\epsilon}}_n^a,\boldsymbol{\epsilon}^w),
\mathcal{L}(\hat{\boldsymbol{\epsilon}}_n^a)\otimes \mathcal{L}(\boldsymbol{\epsilon}^w)
\right) \le 
W_2\left(
\mathcal{L}(\hat{\boldsymbol{\epsilon}}_n^a,\boldsymbol{\epsilon}^w),
\mathcal{L}(\tilde{\boldsymbol{\epsilon}}_n)\otimes \mathcal{L}(\boldsymbol{\epsilon}^w)
\right) \\ &\qquad\qquad\qquad\qquad\qquad+ 
W_2\left(
\mathcal{L}(\tilde{\boldsymbol{\epsilon}}_n)\otimes \mathcal{L}(\boldsymbol{\epsilon}^w),
\mathcal{L}(\hat{\boldsymbol{\epsilon}}_n^a)\otimes \mathcal{L}(\boldsymbol{\epsilon}^w)
\right).
\end{align*}
The second term is bounded in the same way as the first, again by
$L_Q C_n \sqrt{\bar{\alpha}_n\,\mathbb{E}\bigl[\|\mathbf{z}_0\|_2^2\bigr]}$.
Adding the two bounds proves the claim.
\end{proof}

\begin{theorem}[Trajectory decoupling at arbitrary attack depth]
\label{prop:noise_distance_formal}
Under Assumption~\ref{asm:shift_regular}, for every $\lambda\in(0,1]$ with $t_\lambda = \lceil\lambda T\rceil$:
\begin{enumerate}[label=(\roman*),nosep]
    \item It holds that \begin{align*}\mathbb{E}\bigl[\|\hat{\boldsymbol{\epsilon}}_{t_\lambda}^a - \tilde{\boldsymbol{\epsilon}}_{t_\lambda}\|_2^2\bigr]
    \le \Delta_{t_\lambda},~~~\text{where}~~~
    \Delta_{t_\lambda} := L_Q^2 C_{t_\lambda}^2\, \bar{\alpha}_{t_\lambda}\,\mathbb{E}\bigl[\|\mathbf{z}_0\|_2^2\bigr];
    \end{align*}
    \item the joint law of the attacked recovered noise and the watermark noise satisfies
    \begin{align*}
    W_2\left(
    \mathcal{L}(\hat{\boldsymbol{\epsilon}}_{t_\lambda}^a,\boldsymbol{\epsilon}^w),
    \mathcal{L}(\hat{\boldsymbol{\epsilon}}_{t_\lambda}^a)\otimes \mathcal{L}(\boldsymbol{\epsilon}^w)
    \right)
    \le
    2\sqrt{\Delta_{t_\lambda}}.
    \end{align*}
\end{enumerate}
\end{theorem}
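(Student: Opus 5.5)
The theorem is essentially a specialization of the preceding lemmas to the particular depth $n = t_\lambda$. The plan is therefore to set $n := t_\lambda = \lceil \lambda T\rceil$ and check that this is an admissible index. For every $\lambda\in(0,1]$ we have $\lambda T\in(0,T]$, hence $t_\lambda\in\{1,\dots,T\}$. The maps $F_{t_\lambda}$ and $R_{t_\lambda}$ and the constant $C_{t_\lambda}$ are then well defined. I will also point out that $\hat{\boldsymbol{\epsilon}}^a_{t_\lambda}$, as defined in Lemma~\ref{lem:shift_terminal_decoupling}, is exactly the DDIM-inverted noise of the SHIFT output. Indeed, Stage I produces $\mathbf{z}_{t_\lambda}=\sqrt{\bar\alpha_{t_\lambda}}\mathbf{z}_0+\sqrt{1-\bar\alpha_{t_\lambda}}\boldsymbol{\epsilon}$. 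Stage II, after substituting \eqref{eq:predict_clean} into \eqref{eq:reverse}, is exactly the recursion $\mathbf{z}_{t-1}=A_t\mathbf{z}_t+B_t\boldsymbol{\epsilon}_\theta(\mathbf{z}_t,t)+\sigma_t\boldsymbol{\xi}_t$. Finally, the verifier applies $Q$.

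For part (i), I will invoke Lemma~\ref{lem:shift_terminal_decoupling} with $n=t_\lambda$. It gives directly
\[
\mathbb{E}\bigl[\|\hat{\boldsymbol{\epsilon}}^a_{t_\lambda}-\tilde{\boldsymbol{\epsilon}}_{t_\lambda}\|_2^2\bigr]\le L_Q^2C_{t_\lambda}^2\bar\alpha_{t_\lambda}\mathbb{E}[\|\mathbf{z}_0\|_2^2]=\Delta_{t_\lambda}.
\]
Finiteness of $\Delta_{t_\lambda}$ follows from (A1). The lemma also records that $\tilde{\boldsymbol{\epsilon}}_{t_\lambda}$ is independent of $\boldsymbol{\epsilon}^w$ by (A4).

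For part (ii), I will apply the second bound of Lemma~\ref{lem:shift_product_law} at $n=t_\lambda$. Its right-hand side $2L_QC_{t_\lambda}\sqrt{\bar\alpha_{t_\lambda}\mathbb{E}\|\mathbf{z}_0\|^2}$ equals $2\sqrt{\Delta_{t_\lambda}}$, using that $L_Q, C_{t_\lambda}, \bar\alpha_{t_\lambda}\ge 0$ so the square root factors.

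The only step needing care is one I would double-check inside Lemma~\ref{lem:shift_product_law}: the second triangle-inequality term, $W_2(\mathcal{L}(\tilde{\boldsymbol{\epsilon}})\otimes\mathcal{L}(\boldsymbol{\epsilon}^w),\mathcal{L}(\hat{\boldsymbol{\epsilon}}^a)\otimes\mathcal{L}(\boldsymbol{\epsilon}^w))$. I would justify it with the coupling $((\tilde{\boldsymbol{\epsilon}},\boldsymbol{\eta}),(\hat{\boldsymbol{\epsilon}}^a,\boldsymbol{\eta}))$, where $\boldsymbol{\eta}$ is an independent copy of $\boldsymbol{\epsilon}^w$ that is independent of $(\mathbf{z}_0,\boldsymbol{\epsilon},\boldsymbol{\xi})$. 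This coupling has the correct product marginals, and its cost is again $\mathbb{E}\|\hat{\boldsymbol{\epsilon}}^a-\tilde{\boldsymbol{\epsilon}}\|^2\le\Delta_{t_\lambda}$. I will also need finite second moments so that $W_2$ is a metric. I expect this to follow from the Lipschitz assumptions, since $R_n$ is Lipschitz in its Gaussian input, given that $\mathbb{E}\|\boldsymbol{\epsilon}^w\|^2<\infty$.
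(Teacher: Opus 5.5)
Your proposal is correct and follows the paper's proof: the paper also obtains the theorem by specializing Lemma~\ref{lem:shift_terminal_decoupling} and the second bound of Lemma~\ref{lem:shift_product_law} to $n=t_\lambda$. Your independent-copy coupling $((\tilde{\boldsymbol{\epsilon}}_{t_\lambda},\boldsymbol{\eta}),(\hat{\boldsymbol{\epsilon}}^a_{t_\lambda},\boldsymbol{\eta}))$ is a correct and more explicit version of the paper's remark that the second triangle-inequality term is ``bounded in the same way,'' and the finite-second-moment caveat is unnecessary. That caveat would also need $\mathbb{E}\|\boldsymbol{\epsilon}^w\|_2^2<\infty$, which is not among (A1)--(A4), but the $W_2$ triangle inequality holds for arbitrary laws via gluing (with values in $[0,\infty]$), and your couplings already have cost at most $\Delta_{t_\lambda}$.
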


\begin{proof}
Part (i) is Lemma~\ref{lem:shift_terminal_decoupling} applied with $n = t_\lambda$.
Part (ii) is Lemma~\ref{lem:shift_product_law} applied with $n = t_\lambda$.
\end{proof}

\begin{remark}[Interpretation of $\Delta_{t_\lambda}$]
\label{rmk:delta_interpretation}
Note that the bound $\Delta_{t_\lambda} = L_Q^2 C_{t_\lambda}^2\,\bar{\alpha}_{t_\lambda}\,\mathbb{E}[\|\mathbf{z}_0\|_2^2]$ reflects two competing effects.
On one hand, $\bar{\alpha}_{t_\lambda}$ decreases with $t_\lambda$, corresponding to stronger noise erasure of the source signal.
On the other hand, $C_{t_\lambda} = \prod_{s=1}^{t_\lambda}\rho_s$ is a product of per-step contraction factors $\rho_s = A_s + |B_s|L_s$; under typical noise schedules and denoiser regularity, $\rho_s \ge 1$ for most steps, so $C_{t_\lambda}$ generally grows with $t_\lambda$, reflecting error accumulation over more reverse steps.
Whether the overall bound $\Delta_{t_\lambda}$ decreases with $t_\lambda$ depends on the rate at which $\bar{\alpha}_{t_\lambda} \to 0$ dominates the growth of $C_{t_\lambda}^2$, a balance that is schedule- and model-dependent.
\end{remark}

\begin{assumption}[Terminal prior consistency]
\label{asm:shift_prior}
The terminal baseline recovered noise
$
\tilde{\boldsymbol{\epsilon}}_T
=
R_T\bigl(\sqrt{1-\bar{\alpha}_T}\,\boldsymbol{\epsilon};\boldsymbol{\xi}_{1:T}\bigr)
$
is distributed as $\mathcal{N}(\mathbf{0},\mathbf{I}_d)$.
Moreover, the watermark-carrying initial noise satisfies $\boldsymbol{\epsilon}^w$ is also distributed as $\mathcal{N}(\mathbf{0},\mathbf{I}_d)$.
\end{assumption}

\noindent
The condition on $\boldsymbol{\epsilon}^w$ is satisfied or closely approximated by most watermarking schemes: methods such as Gaussian Shading and PRC draw the initial noise from distributions that are marginally close to $\mathcal{N}(\mathbf{0},\mathbf{I}_d)$, and even schemes that impose structured patterns (e.g., Tree-Ring) preserve $\mathbb{E}[\|\boldsymbol{\epsilon}^w\|_2^2] \approx d$, which is the only property used in the proof.
The condition on $\tilde{\boldsymbol{\epsilon}}_T$ is an idealization.
It requires that starting from near-pure noise ($\bar{\alpha}_T \approx 0$), running $T$~steps of ancestral sampling, decoding to pixel space, re-encoding, and performing DDIM inversion yields a standard Gaussian.
This holds exactly when the diffusion model is perfectly calibrated and the VAE satisfies $\mathcal{E} \circ \mathcal{D} = \mathrm{id}$.
In practice, neither condition is exact, but modern latent diffusion models satisfy both to a close approximation, and our experimental noise-distance measurements (Table~2, $L_1@0.7 \approx 1.07$) confirm that the post-attack recovered noise is empirically indistinguishable from a random Gaussian draw.

\begin{corollary}[Random-baseline noise distance at the terminal step]
\label{cor:shift_2d}
Under Assumptions~\ref{asm:shift_regular} and~\ref{asm:shift_prior}, it holds that
\begin{align*}
\left|
\mathbb{E}\bigl[
\|\hat{\boldsymbol{\epsilon}}_T^a-\boldsymbol{\epsilon}^w\|_2^2
\bigr]
-
2d
\right|
\le
\Delta_T + 2\sqrt{2d\,\Delta_T},
\end{align*}
where $\Delta_T = \Delta_{t_\lambda}\big|_{t_\lambda=T}$ is the terminal-step specialization of Proposition~\ref{prop:noise_distance_formal}(i).
\end{corollary}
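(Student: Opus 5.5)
We decompose $\hat{\boldsymbol{\epsilon}}_T^a-\boldsymbol{\epsilon}^w$ around the independent baseline $\tilde{\boldsymbol{\epsilon}}_T$ from Lemma~\ref{lem:shift_terminal_decoupling}. Set $\mathbf{e}:=\hat{\boldsymbol{\epsilon}}_T^a-\tilde{\boldsymbol{\epsilon}}_T$ and $\mathbf{g}:=\tilde{\boldsymbol{\epsilon}}_T-\boldsymbol{\epsilon}^w$. Then $\hat{\boldsymbol{\epsilon}}_T^a-\boldsymbol{\epsilon}^w=\mathbf{g}+\mathbf{e}$, and expanding the square gives
\begin{align*}
\mathbb{E}\bigl[\|\hat{\boldsymbol{\epsilon}}_T^a-\boldsymbol{\epsilon}^w\|_2^2\bigr]
=\mathbb{E}\bigl[\|\mathbf{g}\|_2^2\bigr]+2\,\mathbb{E}\bigl[\langle \mathbf{g},\mathbf{e}\rangle\bigr]+\mathbb{E}\bigl[\|\mathbf{e}\|_2^2\bigr].
\end{align*}
The plan is to show that the first term equals exactly $2d$ and then to bound the other two terms.

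For the first term, Lemma~\ref{lem:shift_terminal_decoupling} gives that $\tilde{\boldsymbol{\epsilon}}_T$ is independent of $\boldsymbol{\epsilon}^w$. Assumption~\ref{asm:shift_prior} makes both $\mathcal{N}(\mathbf{0},\mathbf{I}_d)$. Hence $\mathbf{g}\sim\mathcal{N}(\mathbf{0},2\mathbf{I}_d)$, so $\mathbb{E}\|\mathbf{g}\|_2^2=2d$. This needs only zero means and second moments equal to $d$, which is consistent with the remark that only $\mathbb{E}\|\boldsymbol{\epsilon}^w\|_2^2\approx d$ is used. The cross term $\mathbb{E}[\|\tilde{\boldsymbol{\epsilon}}_T\|\,\|\boldsymbol{\epsilon}^w\|]$ disappears because the two vectors are independent and have zero mean.

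For the remaining terms, Theorem~\ref{prop:noise_distance_formal}(i) with $t_\lambda=T$ gives $0\le\mathbb{E}\|\mathbf{e}\|_2^2\le\Delta_T$. The cross term is handled by Cauchy--Schwarz, first pointwise and then in expectation:
\begin{align*}
\bigl|\mathbb{E}\langle\mathbf{g},\mathbf{e}\rangle\bigr|\le\sqrt{\mathbb{E}\|\mathbf{g}\|_2^2}\,\sqrt{\mathbb{E}\|\mathbf{e}\|_2^2}\le\sqrt{2d\,\Delta_T}.
\end{align*}
Combining the three pieces gives $\bigl|\mathbb{E}\|\hat{\boldsymbol{\epsilon}}_T^a-\boldsymbol{\epsilon}^w\|_2^2-2d\bigr|\le 2\sqrt{2d\Delta_T}+\Delta_T$, which is the claimed bound.

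No step is delicate. The one point to check is integrability, so that the expansion and Cauchy--Schwarz are legitimate. This follows from Gaussianity of $\mathbf{g}$ and from $\Delta_T<\infty$, which holds by (A1).
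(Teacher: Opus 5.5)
Your proof is correct and is essentially the paper's argument. It uses the same decomposition around the independent baseline $\tilde{\boldsymbol{\epsilon}}_T$, Theorem~\ref{prop:noise_distance_formal}(i) at $t_\lambda=T$ for the $\Delta_T$ term, Cauchy--Schwarz for the cross term, and independence plus Assumption~\ref{asm:shift_prior} to obtain exactly $2d$. One small slip: the term that vanishes when computing $\mathbb{E}\|\mathbf{g}\|_2^2$ is $\mathbb{E}\langle\tilde{\boldsymbol{\epsilon}}_T,\boldsymbol{\epsilon}^w\rangle$, not $\mathbb{E}[\|\tilde{\boldsymbol{\epsilon}}_T\|_2\,\|\boldsymbol{\epsilon}^w\|_2]$, which is strictly positive; independence together with $\mathbb{E}\tilde{\boldsymbol{\epsilon}}_T=\mathbf{0}$ suffices to kill it.
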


\begin{proof}
It suffices to bound
$\mathbb{E}\bigl[\|\hat{\boldsymbol{\epsilon}}_T^a-\boldsymbol{\epsilon}^w\|_2^2\bigr]$.
Using
\begin{align*}
\hat{\boldsymbol{\epsilon}}_T^a-\boldsymbol{\epsilon}^w
=
(\hat{\boldsymbol{\epsilon}}_T^a-\tilde{\boldsymbol{\epsilon}}_T)
+
(\tilde{\boldsymbol{\epsilon}}_T-\boldsymbol{\epsilon}^w),
\end{align*}
we obtain
\begin{align*}
&\mathbb{E}\bigl[
\|\hat{\boldsymbol{\epsilon}}_T^a-\boldsymbol{\epsilon}^w\|_2^2
\bigr]
-
\mathbb{E}\bigl[
\|\tilde{\boldsymbol{\epsilon}}_T-\boldsymbol{\epsilon}^w\|_2^2
\bigr] \\
&\qquad=
\mathbb{E}\bigl[
\|\hat{\boldsymbol{\epsilon}}_T^a-\tilde{\boldsymbol{\epsilon}}_T\|_2^2
\bigr]
+
2\mathbb{E}\bigl[
\langle
\hat{\boldsymbol{\epsilon}}_T^a-\tilde{\boldsymbol{\epsilon}}_T,\,
\tilde{\boldsymbol{\epsilon}}_T-\boldsymbol{\epsilon}^w
\rangle
\bigr].
\end{align*}
Therefore, by Cauchy--Schwarz,
\begin{align*}
&\left|
\mathbb{E}\bigl[
\|\hat{\boldsymbol{\epsilon}}_T^a-\boldsymbol{\epsilon}^w\|_2^2
\bigr]
-
\mathbb{E}\bigl[
\|\tilde{\boldsymbol{\epsilon}}_T-\boldsymbol{\epsilon}^w\|_2^2
\bigr]
\right| \\
&\qquad\le
\mathbb{E}\bigl[
\|\hat{\boldsymbol{\epsilon}}_T^a-\tilde{\boldsymbol{\epsilon}}_T\|_2^2
\bigr]
+
2
\Bigl(
\mathbb{E}\bigl[
\|\hat{\boldsymbol{\epsilon}}_T^a-\tilde{\boldsymbol{\epsilon}}_T\|_2^2
\bigr]
\Bigr)^{1/2}
\Bigl(
\mathbb{E}\bigl[
\|\tilde{\boldsymbol{\epsilon}}_T-\boldsymbol{\epsilon}^w\|_2^2
\bigr]
\Bigr)^{1/2}.
\end{align*}
By Proposition~\ref{prop:noise_distance_formal}(i) with $t_\lambda = T$,
\begin{align*}
\mathbb{E}\bigl[
\|\hat{\boldsymbol{\epsilon}}_T^a-\tilde{\boldsymbol{\epsilon}}_T\|_2^2
\bigr]
\le
\Delta_T.
\end{align*}
By Assumption~\ref{asm:shift_prior}, $\tilde{\boldsymbol{\epsilon}}_T$ and $\boldsymbol{\epsilon}^w$ are independent standard Gaussian vectors, so
\begin{align*}
\mathbb{E}\bigl[
\|\tilde{\boldsymbol{\epsilon}}_T-\boldsymbol{\epsilon}^w\|_2^2
\bigr]
=
\mathbb{E}\|\tilde{\boldsymbol{\epsilon}}_T\|_2^2
+
\mathbb{E}\|\boldsymbol{\epsilon}^w\|_2^2
=
d+d
=
2d.
\end{align*}
Hence
\begin{align*}
\left|
\mathbb{E}\bigl[
\|\hat{\boldsymbol{\epsilon}}_T^a-\boldsymbol{\epsilon}^w\|_2^2
\bigr]
-
2d
\right|
\le
\Delta_T + 2\sqrt{2d\,\Delta_T}. 
\end{align*}
\end{proof}

\section{Additional Experimental Results}
\label{app:experiments}

This appendix provides supplementary qualitative and quantitative 
results that complement the main experiments in Section~5.

\subsection{Qualitative Comparison Across All Watermarking Methods}
\label{app:qualitative_all}

Figure~\ref{fig:qualitative_all_methods} provides a comprehensive 
visual comparison of SHIFT's output quality across all nine evaluated 
watermarking methods at attack strength $\lambda=0.50$. For each 
method, the left column shows the original watermarked image and the 
right column shows the corresponding attacked image. Across all nine 
methods---spanning noise-space (TR, RI, PRC, WIND), frequency-domain 
(GS, GM, SFW), and optimization-based (ROBIN, SEAL) paradigms---SHIFT 
consistently preserves the subject identity, color palette, and 
compositional structure of the original. The attacked images are 
visually indistinguishable from the watermarked originals, confirming 
that trajectory deflection via stochastic resampling does not introduce 
perceptible pixel-space artifacts.

\begin{figure*}[!htp]
    \centering

    \newcommand{\figw}{0.26\textwidth}
\newcommand{\colgap}{0pt}
\newcommand{\rowgap}{0.4em}

\newcommand{\methodfig}[2]{
    \begin{subfigure}[t]{\figw}
        \centering
        {\scriptsize Watermarked/$_{\mathrm{#1}}$
        \hspace{0.8em}
        Attacked/$_{\mathrm{#1}}$\par}
        \vspace{1pt}
        \begin{tikzpicture}[baseline]
            \node[anchor=south west,inner sep=0] (img) at (0,0)
            {\includegraphics[width=\linewidth,trim=6 6 6 6,clip]{#2}};
            \draw[dashed, white, line width=0.6pt]
                ($(img.south west)!0.5!(img.south east)$) --
                ($(img.north west)!0.5!(img.north east)$);
        \end{tikzpicture}
    \end{subfigure}
}
    \methodfig{GM}{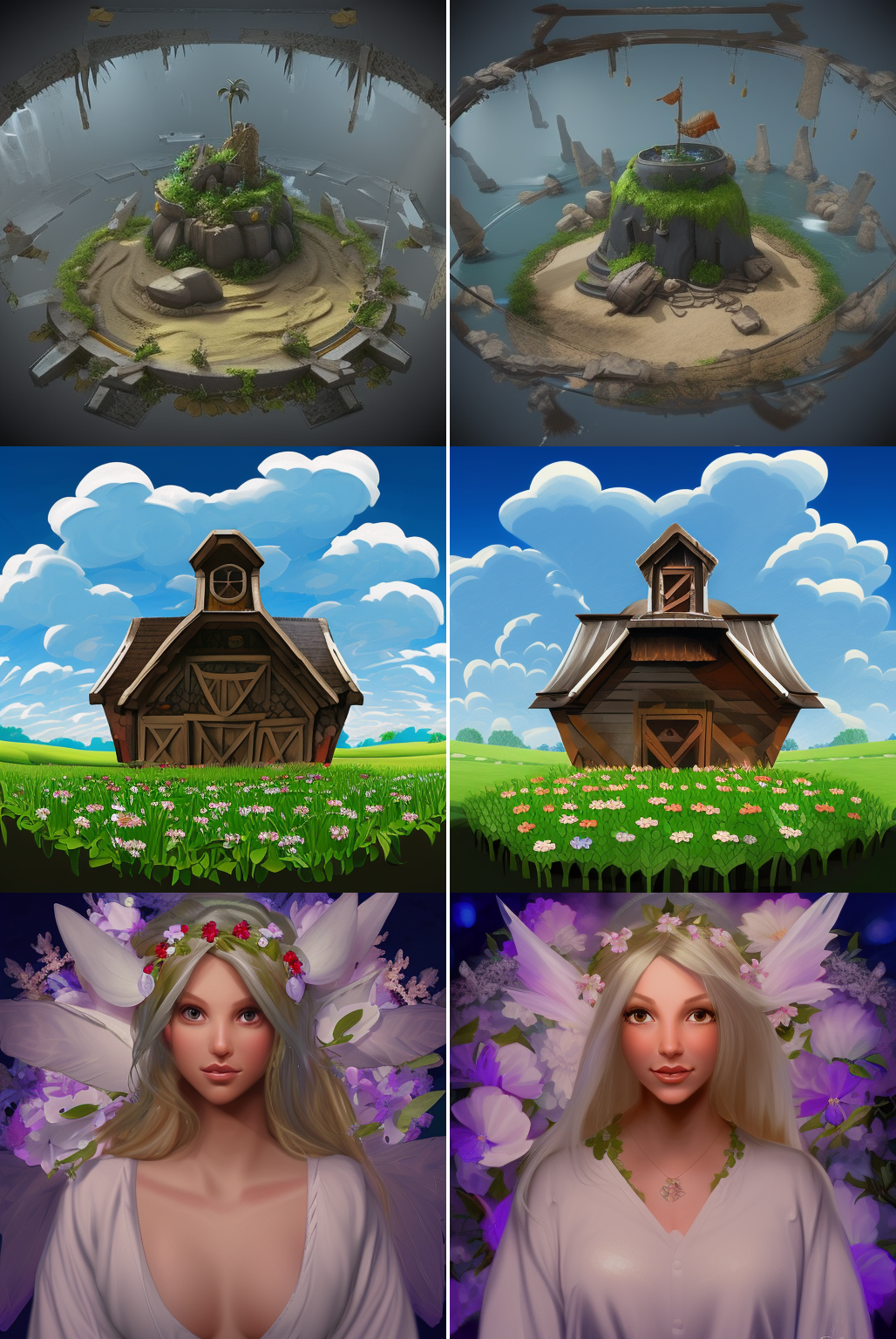}
    \hspace{\colgap}
    \methodfig{SEAL}{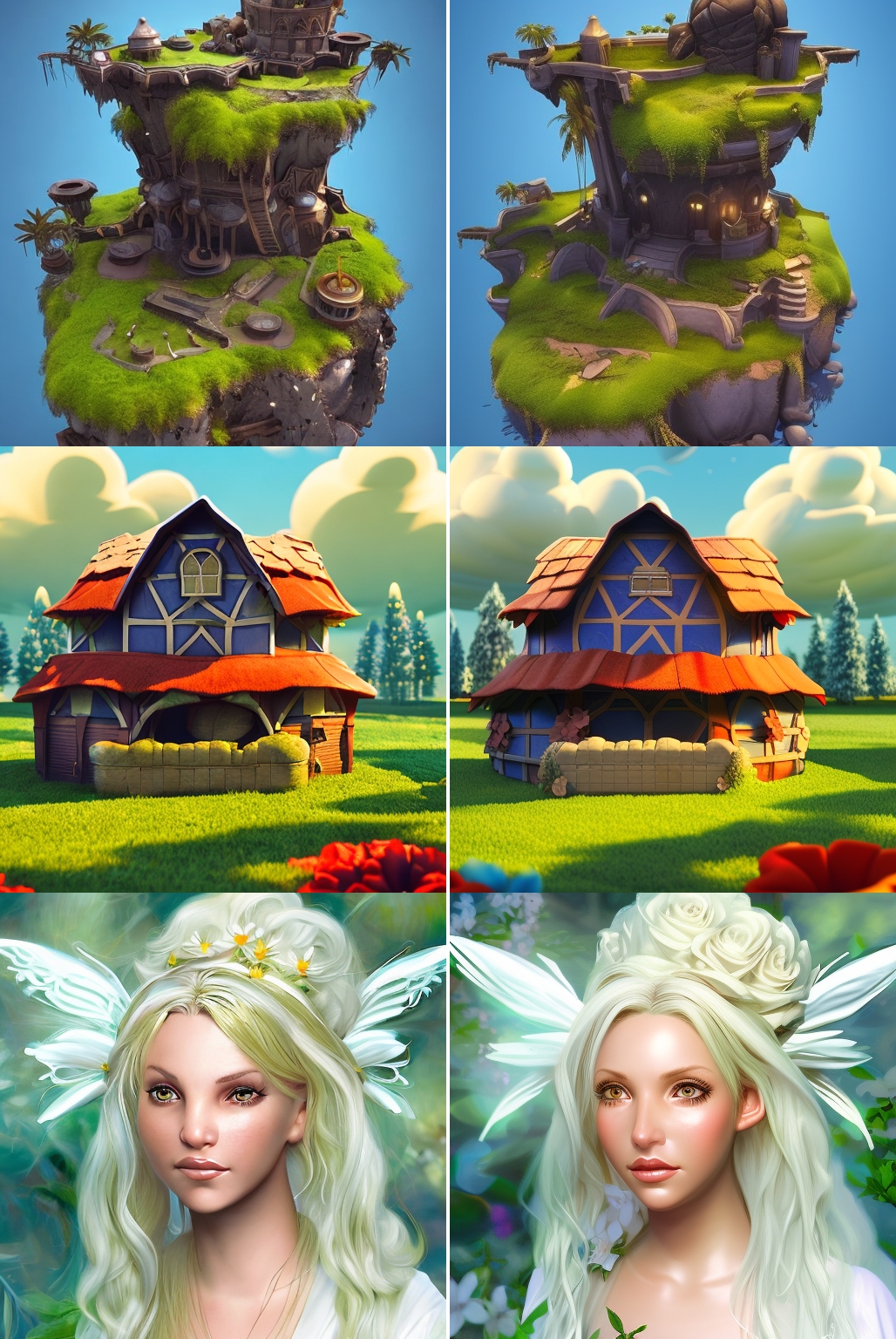}
    \hspace{\colgap}
    \methodfig{PRC}{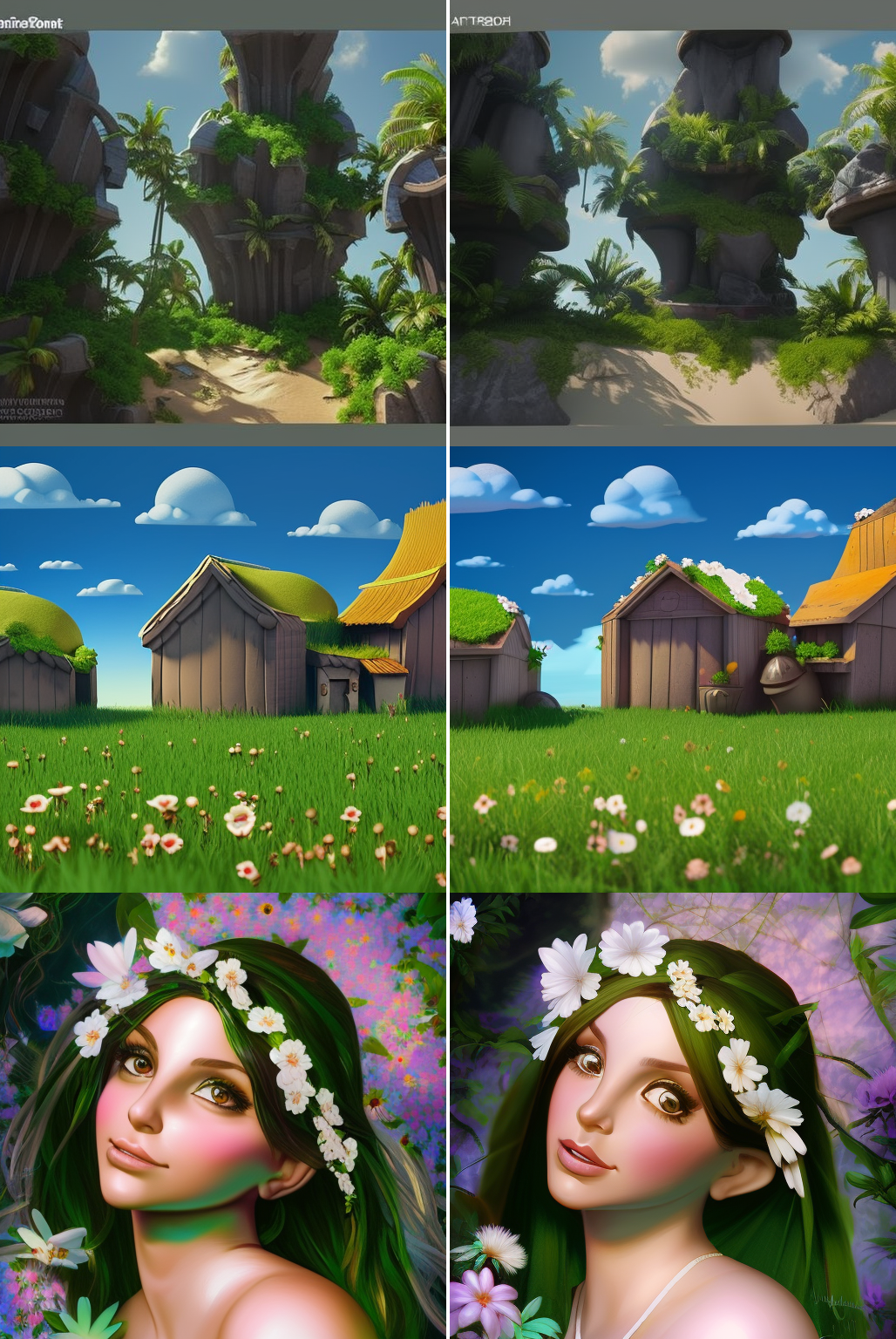}

    \vspace{\rowgap}

    \methodfig{ROBIN}{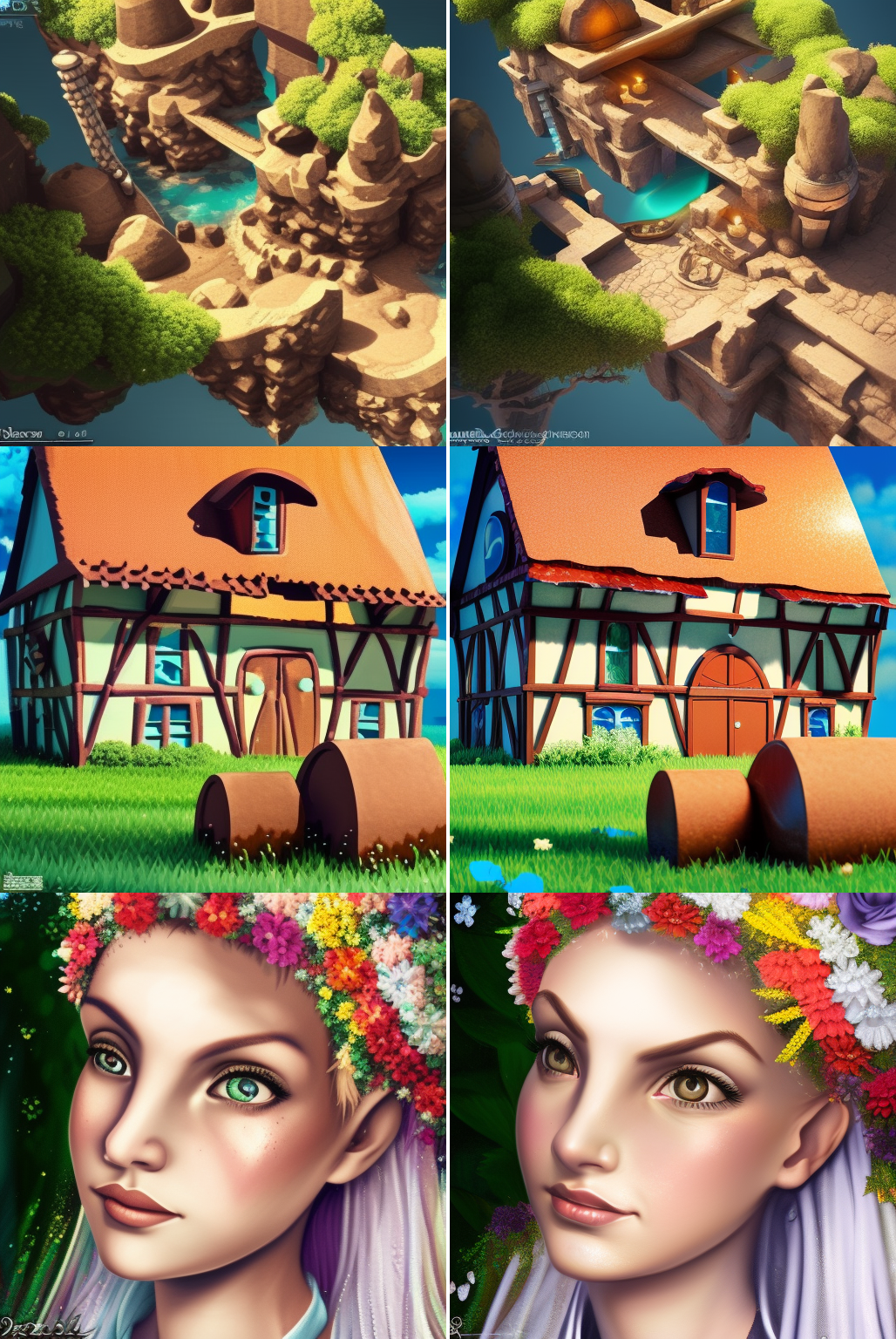}
    \hspace{\colgap}
    \methodfig{RI}{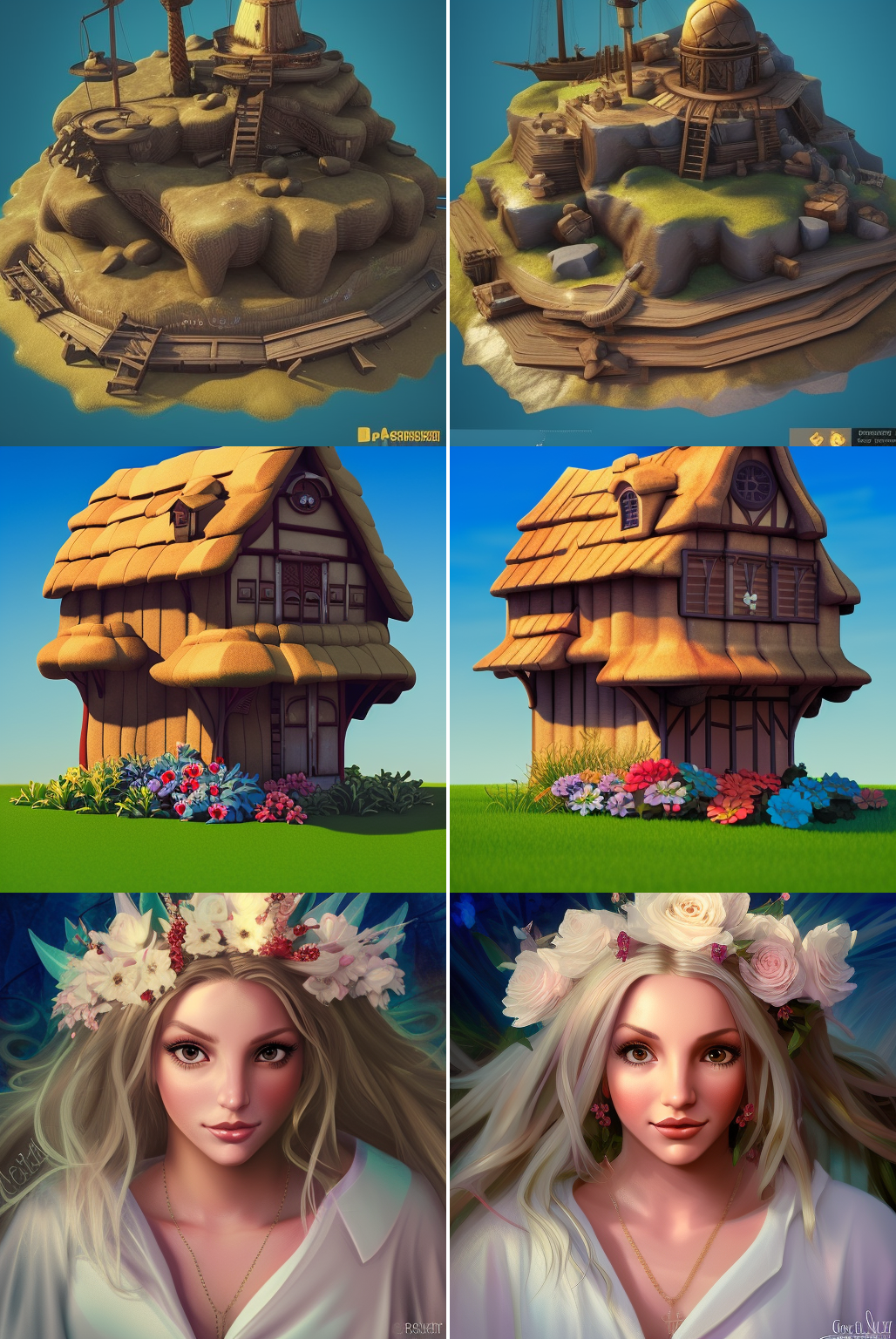}
    \hspace{\colgap}
    \methodfig{TR}{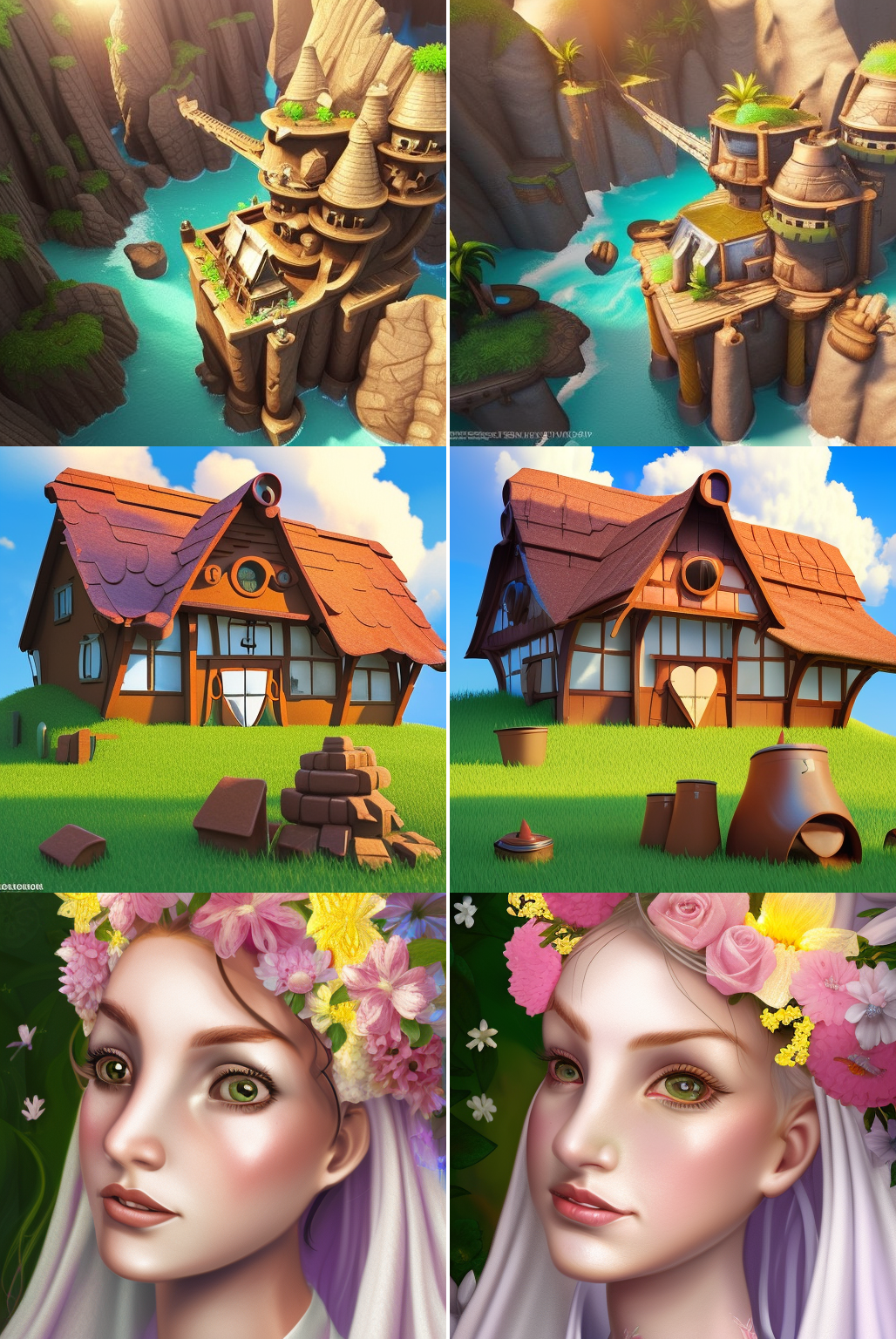}

    \vspace{\rowgap}

    \methodfig{SFW}{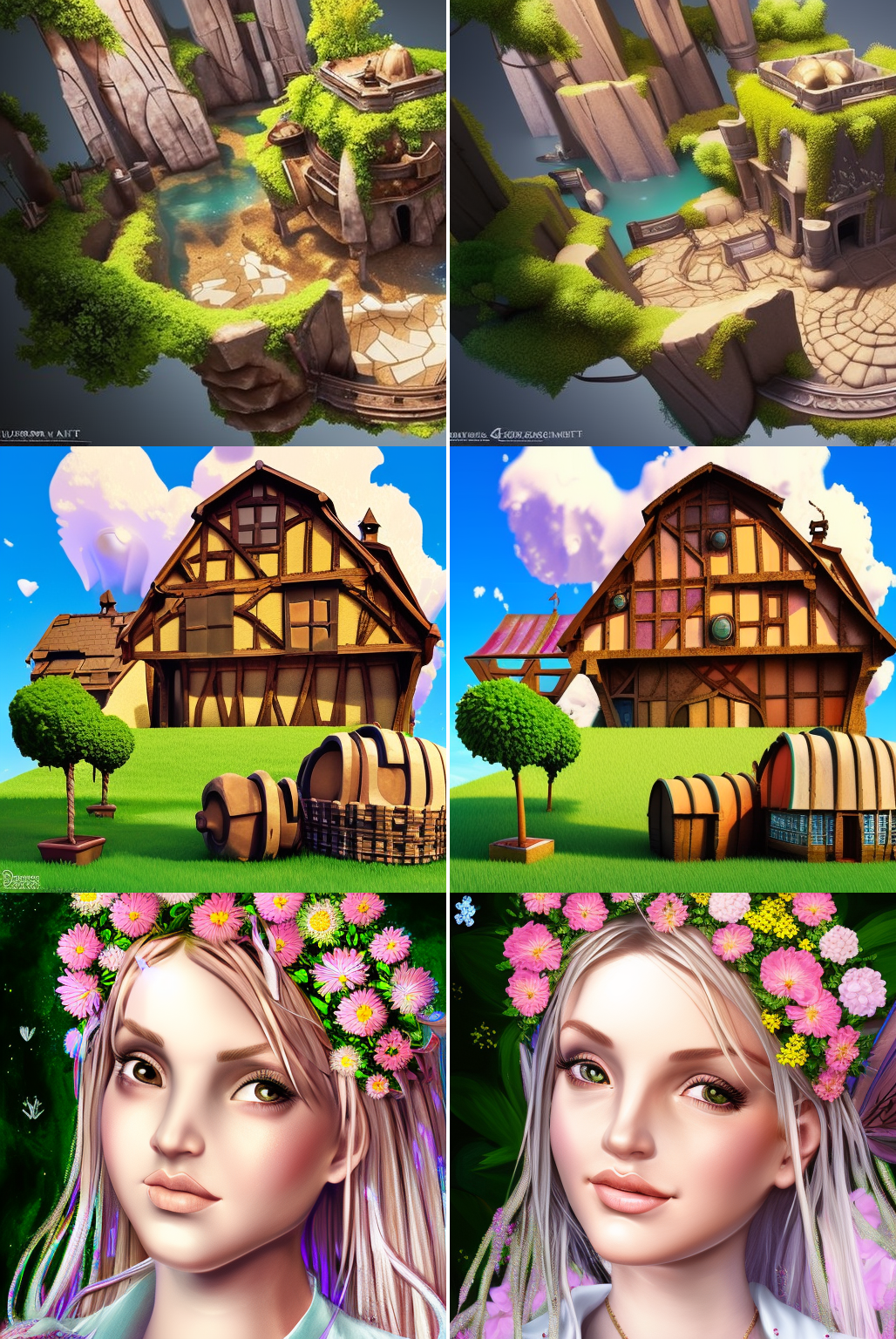}
    \hspace{\colgap}
    \methodfig{GS}{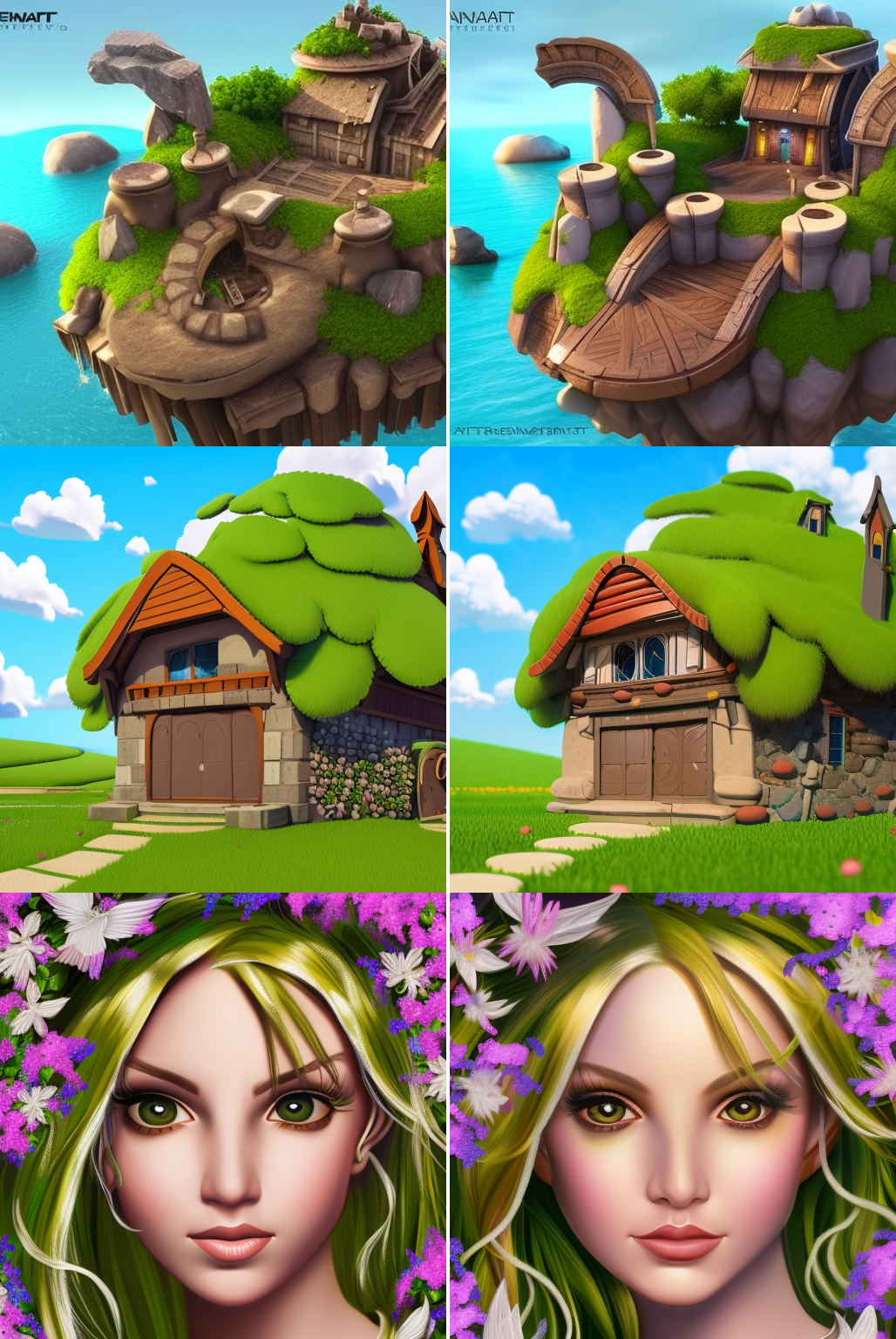}
    \hspace{\colgap}
    \methodfig{WIND}{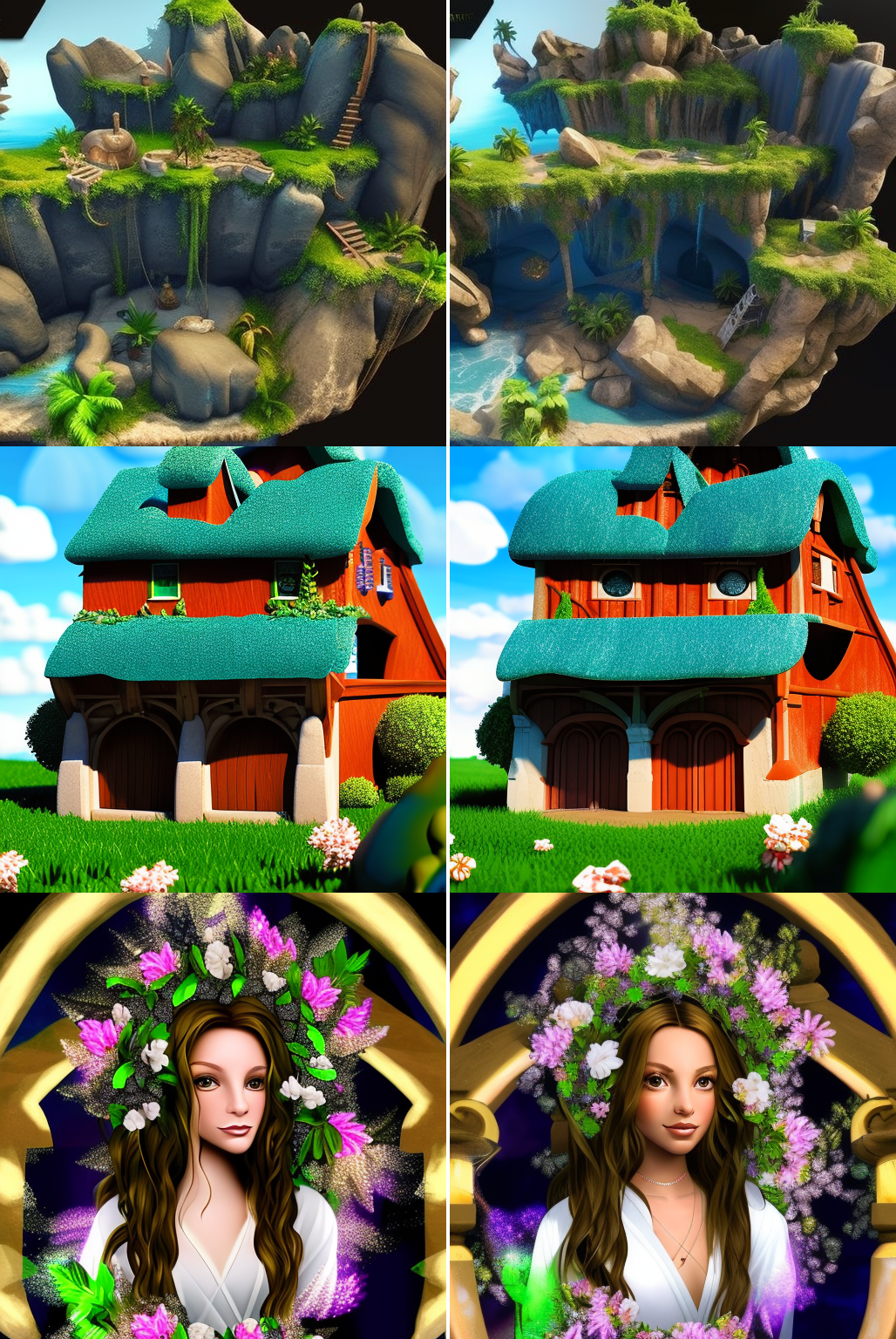}

    \caption{\textbf{Qualitative comparison of visual fidelity before 
    and after attack across nine watermarking methods.} In each subfigure, 
    the left column shows the watermarked images and the right column shows 
    the attacked images at attack strength $\lambda=0.50$.}
    \label{fig:qualitative_all_methods}
\end{figure*}

\clearpage

\subsection{Effect of Attack Strength on Image Fidelity}
\label{app:robustness_levels}

Figure~\ref{fig:robustness_compare} illustrates the fidelity--evasion 
trade-off across three representative watermarking methods of 
increasing robustness as $\lambda$ grows from $0.30$ to $0.60$. 
For weakly robust methods such as PRC, successful evasion is 
achieved at $\lambda=0.30$ with virtually no perceptual change 
relative to the original. For moderately robust methods such as 
ROBIN, minor variations in fine-grained texture appear at 
$\lambda=0.45$ but remain within an acceptable perceptual range. 
For the most robust method Tree-Ring, stronger perturbation at 
$\lambda=0.60$ is required, introducing slight deviation in 
high-frequency details while the overall scene structure and 
semantic content are well preserved. This progression is consistent 
with the robustness hierarchy reported in Table~1, and confirms 
that the minimum effective $\lambda$ is an intrinsic property of 
each watermarking scheme rather than a limitation of SHIFT.
\vspace{4mm}
\begin{figure*}[!htp]
    \centering
    \setlength{\tabcolsep}{2pt}
    \renewcommand{\arraystretch}{1.0}

    \begin{tabular}{c c c c c}
        \toprule
        \textbf{Method} & \textbf{Original} & 
        \textbf{$\lambda=0.30$} & 
        \textbf{$\lambda=0.45$} & 
        \textbf{$\lambda=0.60$} \\
        \midrule

        \textbf{PRC (Weak)} &
        \includegraphics[width=0.18\textwidth]
            {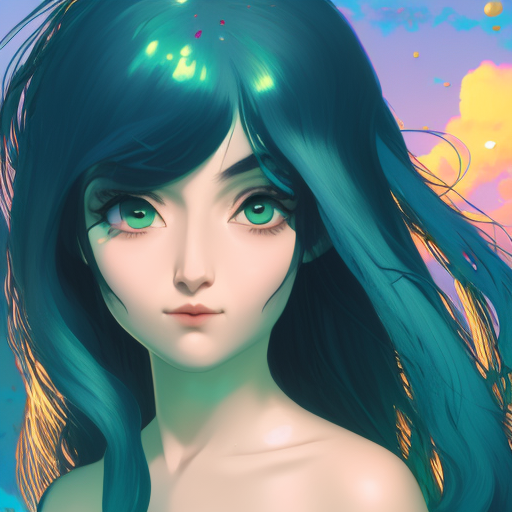} &
        \includegraphics[width=0.18\textwidth]
            {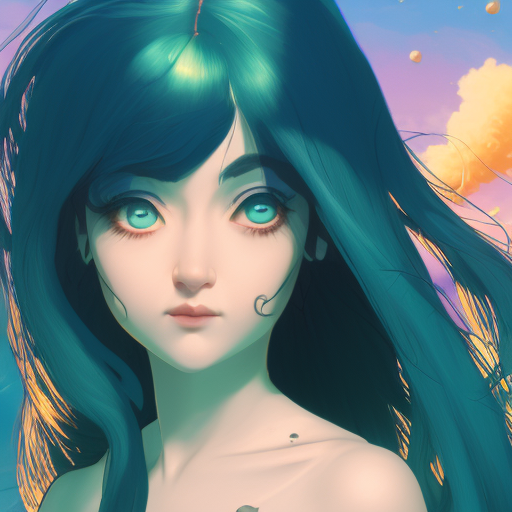} &
        \includegraphics[width=0.18\textwidth]
            {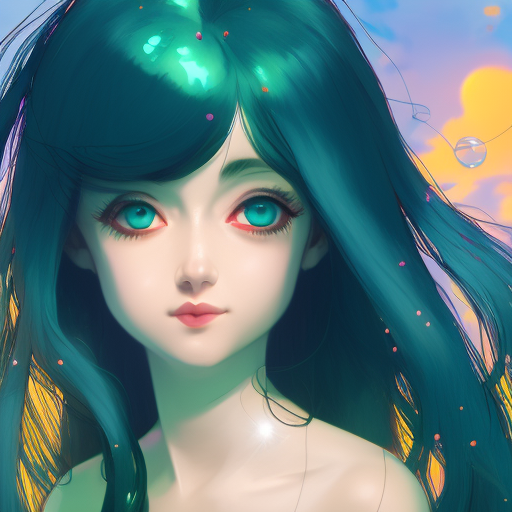} &
        \includegraphics[width=0.18\textwidth]
            {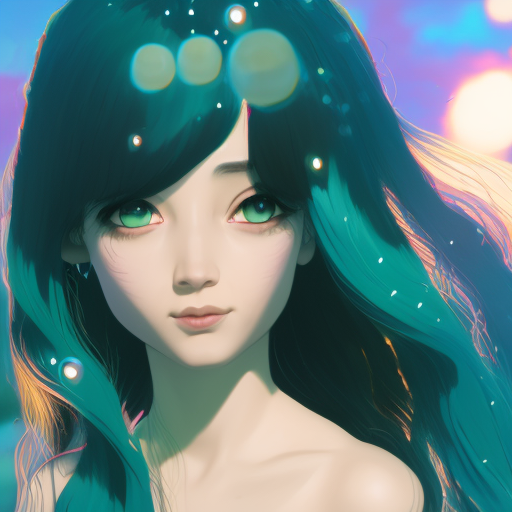} \\[4pt]

        \textbf{ROBIN (Moderate)} &
        \includegraphics[width=0.18\textwidth]
            {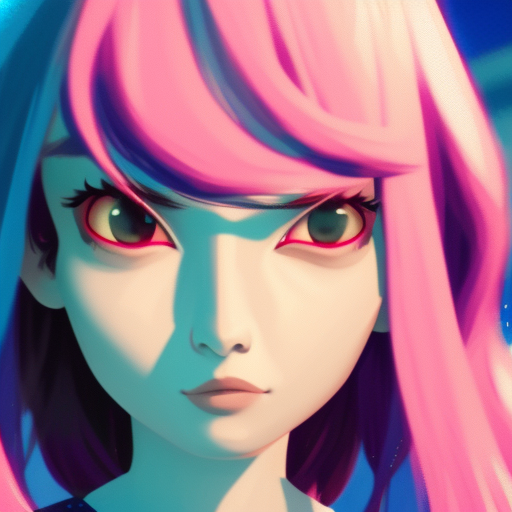} &
        \includegraphics[width=0.18\textwidth]
            {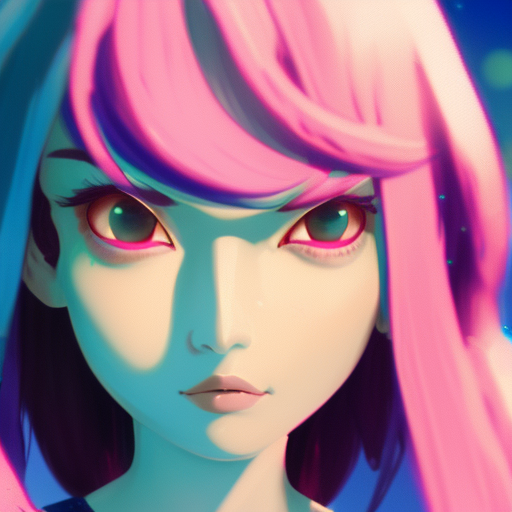} &
        \includegraphics[width=0.18\textwidth]
            {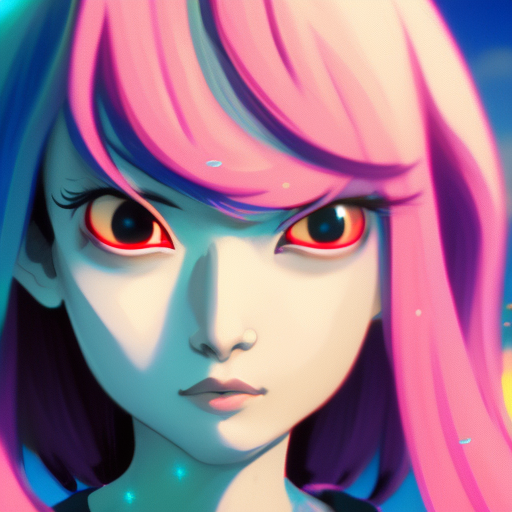} &
        \includegraphics[width=0.18\textwidth]
            {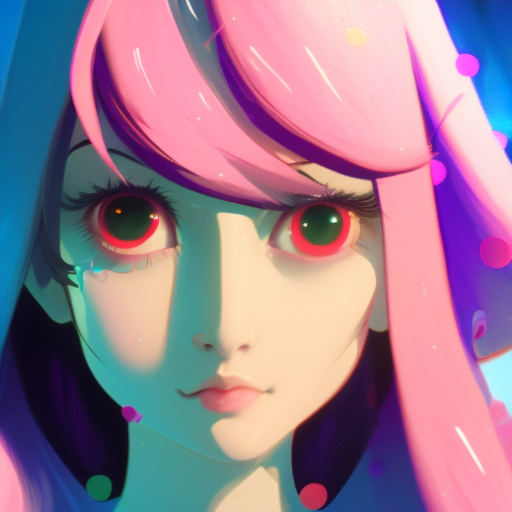} \\[4pt]

        \textbf{Tree-Ring (Strong)} &
        \includegraphics[width=0.18\textwidth]
            {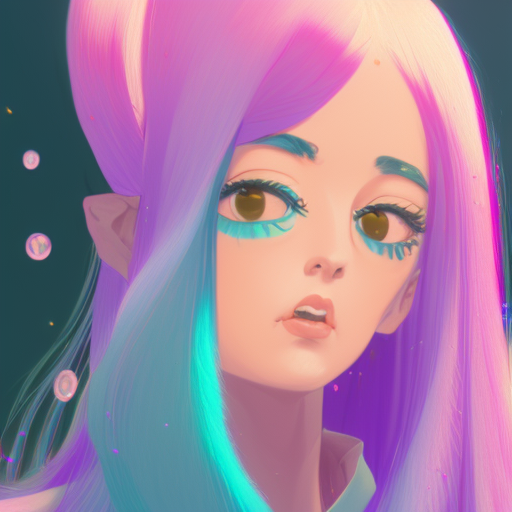} &
        \includegraphics[width=0.18\textwidth]
            {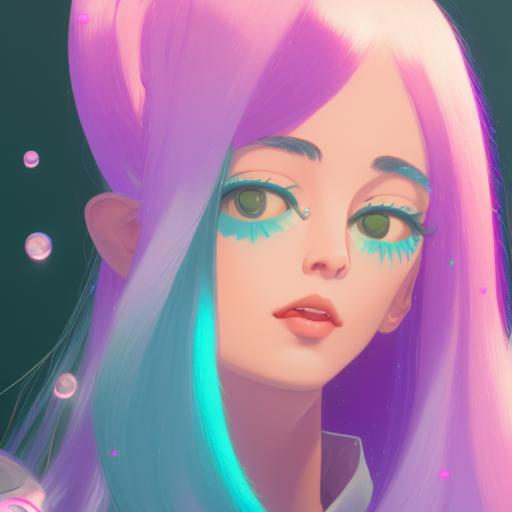} &
        \includegraphics[width=0.18\textwidth]
            {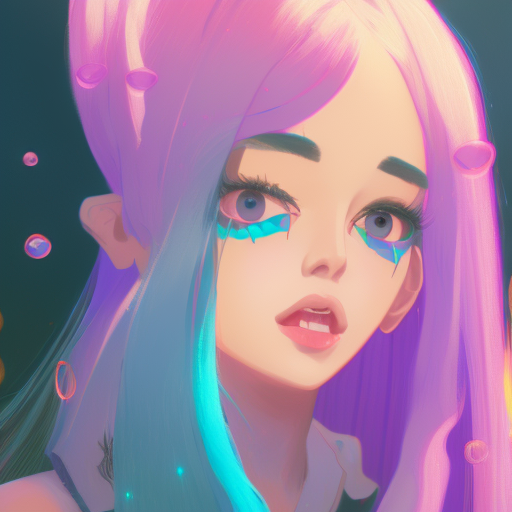} &
        \includegraphics[width=0.18\textwidth]
            {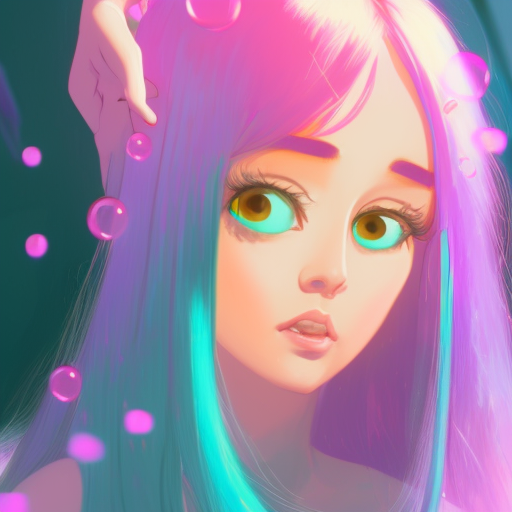} \\

        \bottomrule
    \end{tabular}

    \caption{\textbf{Visual comparison of watermark robustness under 
    different attack strengths.} PRC, ROBIN, and Tree-Ring represent 
    weakly, moderately, and highly robust watermarking methods, 
    respectively. As $\lambda$ increases, the attacked outputs reveal 
    clear differences in the fidelity--evasion trade-off across 
    robustness levels.}
    \label{fig:robustness_compare}
\end{figure*}

\clearpage
\subsection{Trajectory Decoupling: Noise Distance as a Function 
of Attack Strength}
\label{app:noise_distance}

Figure~\ref{fig:noise_distance} plots the mean $L_1$ and $L_2$ 
noise distances between the DDIM-inverted noise of the attacked 
image and the original watermark-carrying noise, as a function 
of $\lambda$, for all nine watermarking methods. Both metrics 
increase monotonically with $\lambda$ across every method, 
providing direct empirical evidence that stronger attacks 
progressively decouple the attacked sample from the original 
watermark trajectory in latent space. Notably, the curves 
converge to a narrow band of $L_1 \approx 1.05$--$1.07$ and 
$L_2 \approx 1.31$--$1.34$ at $\lambda=0.70$, regardless of 
the specific watermarking scheme. This method-agnostic saturation 
is consistent with the theoretical prediction of 
Theorem~\ref{prop:noise_distance_formal}: once sufficient 
forward diffusion depth is applied, the recovered noise 
distribution converges toward the random Gaussian baseline 
($L_2 \approx \sqrt{2d}$), independently of the embedded 
watermark structure.

\begin{figure*}[!htp]
    \centering
    \includegraphics[width=\textwidth]{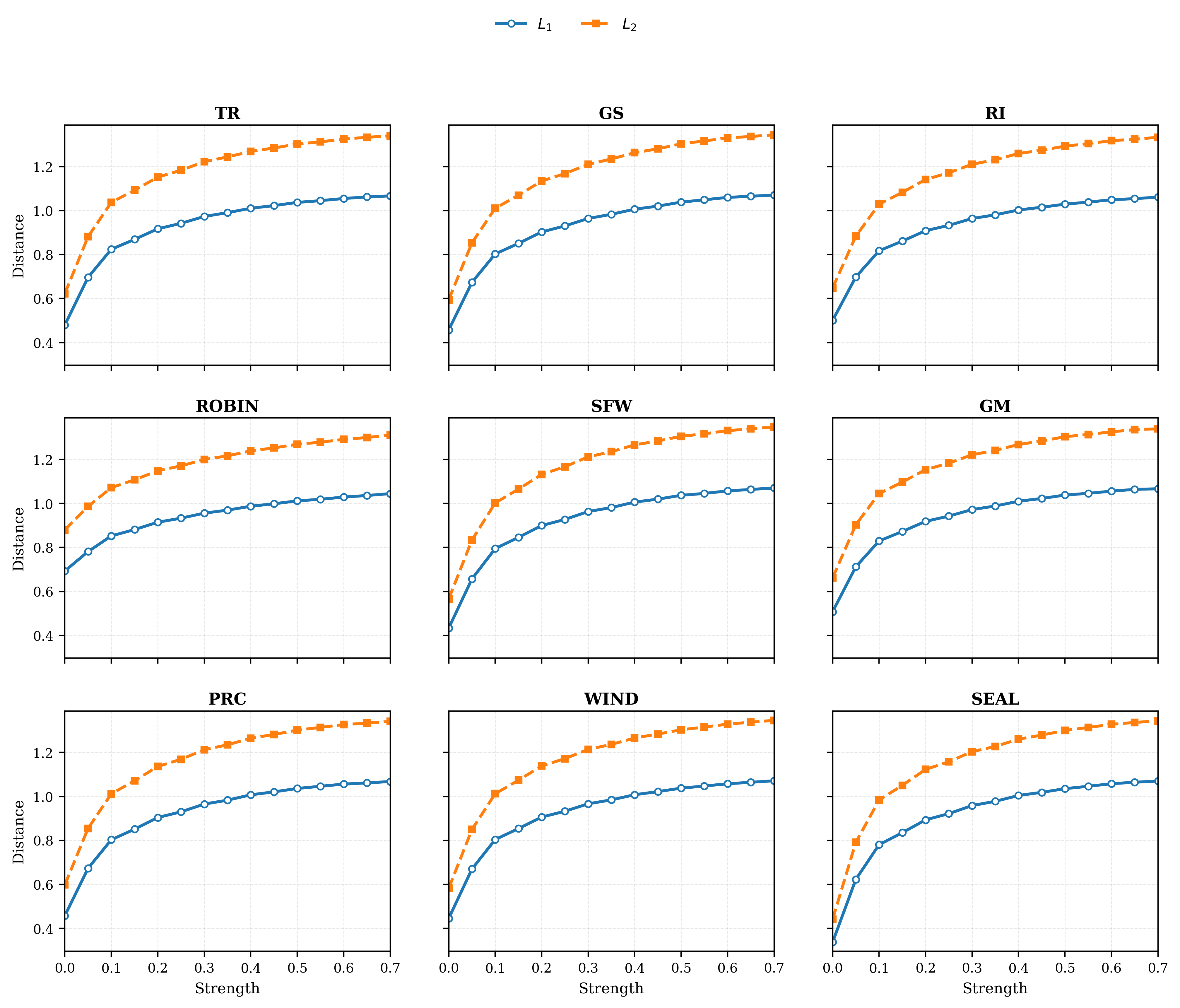}
    \caption{\textbf{Mean $L_1$ and $L_2$ noise distances as 
    functions of attack strength $\lambda$ across nine watermarking 
    methods.} Both metrics increase monotonically with $\lambda$, 
    confirming that SHIFT progressively decouples the attacked image 
    from the original watermark trajectory. The curves converge to 
    a method-agnostic regime near the random Gaussian baseline at 
    large $\lambda$, consistent with the theoretical bound in 
    Theorem~\ref{prop:noise_distance_formal}.}
    \label{fig:noise_distance}
\end{figure*}